\newtheorem{theorem}{Theorem} 
\newtheorem{lemma}{Lemma}
\newtheorem{corollary}{Corollary}
\newtheorem{definition}{Definition}
\newtheorem{remark}{Remark}
\title{Contrasting Adversarial Perturbations: \\ The Space of  Harmless Perturbations}
\author{
\textbf{Lu Chen}$^{1}$\quad\textbf{Shaofeng Li}$^{2}$\quad\textbf{Benhao Huang}$^{1}$\quad\textbf{Fan Yang}$^{1}$\quad\textbf{Zheng Li}$^{1}$\quad\textbf{Jie Li}$^{1}$\quad\textbf{Yuan Luo}$^{1}$\\[2pt]
$^{1}$Shanghai Jiao Tong University \\ $^{2}$Peng Cheng Laboratory\\
\texttt{\{lu.chen,hbh001098hbh,fan-yang,li-zheng,lijiecs\}@sjtu.edu.cn}\\
\texttt{lishf@pcl.ac.cn}, \texttt{luoyuan@cs.sjtu.edu.cn}
}
\begin{document}

\maketitle

\begin{abstract}
Existing works have extensively studied adversarial examples, which are minimal perturbations that can mislead the output of deep neural networks (DNNs) while remaining imperceptible to humans. However, in this work, we reveal the existence of a harmless perturbation space, in which perturbations drawn from this space, regardless of their magnitudes, leave the network output unchanged when applied to inputs. Essentially, the harmless perturbation space emerges from the usage of non-injective functions (linear or non-linear layers) within DNNs, enabling multiple distinct inputs to be mapped to the same output. For linear layers with input dimensions exceeding output dimensions, any linear combination of the orthogonal bases of the nullspace of the parameter consistently yields no change in their output. For non-linear layers, the harmless perturbation space may expand, depending on the properties of the layers and input samples. 
Inspired by this property of DNNs, we solve for a family of general perturbation spaces that are redundant for the DNN's decision, and can be used to hide sensitive data and serve as a means of model identification. Our work highlights the distinctive robustness of DNNs (\textit{i.e.}, consistency under large magnitude perturbations) in contrast to adversarial examples (vulnerability for small imperceptible noises). 
\end{abstract}


\section{Introduction}
\label{sec:introduction}

The robustness of Deep Neural Networks (DNNs) against structured and unstructured perturbations has attracted significant attention in recent years~\citep{szegedy2013intriguing, nguyen2015deep, fawzi2016robustness, salman2021unadversarial}. 
In particular, deep learning models are shown highly vulnerable to adversarial perturbations~\citep{szegedy2013intriguing}. These well-crafted perturbations, which are imperceptibly small to the human eye, cause DNNs to misclassify with high confidence~\citep{carlini2017towards, madry2017towards, croce2020reliable}. 
Naturally, an inquiry arises: 

\textit{\quad Are there perturbations within the input space capable of preserving network output invariance?}


Unlike vulnerability against adversarial examples, in this paper, we reveal the robustness of DNNs to specific perturbations that render the network output mathematically strictly invariant. We demonstrate the existence of such \textit{harmless} perturbations that, when 
introduced onto natural images or embeddings, regardless of their magnitude, will not affect the discrimination of the DNN. 
Such harmless perturbations arising from the linear layers are universal, as they are instance-independent and solely determined by the parameter space of the DNN. These harmless perturbations span a continuous harmless subspace, embedded within the high-dimensional feature space. The surprising existence of harmless perturbations and their subspaces reveals a distinctive view of DNN robustness.


%

For the linear layers of DNNs, we find that when its input dimension $n$ exceeds the output dimension $m$, the harmless perturbation subspace of this layer can be derived by computing the \textit{nullspace} of its parameter matrix $A$, \textit{i.e.}, $N(A)=\{v\in\mathbb{R}^n| Av = \mathbf{0}\}$. To this end, the harmless subspace exhibits a dimension of $(n-m)$ and is embedded within an $n$-dimensional feature space.
Furthermore, the harmless perturbation space \textit{may} expand when involving non-linear layers, depending on the specific non-linear functions and input samples (See Section~\ref{sec:non_linear}). 
Inspired by the harmless subspace of linear layers, we further investigate the robustness of DNNs against more general perturbations, \textit{i.e.,} random noises or adversarial perturbations. We find that a family of those general perturbations, irrespective of their magnitude, identically influence the DNN's output. This phenomenon stems from the decomposition of arbitrary perturbations into the sum of any harmless and harmful components. Consequently, the network output for general perturbations becomes equivalent to that of harmful perturbations, particularly aligning with that of components orthogonal to the harmless perturbation subspace (Fig.~\ref{Fig:harmless_perturbations}(b)). Essentially, for any linear layer with a harmless subspace, the equivalent feature space is characterized by identical orthogonal components, leading to consistent network outputs. 

\begin{figure*}[t]
\begin{center}
\vskip -0.2in
\includegraphics[width=1.0\textwidth]{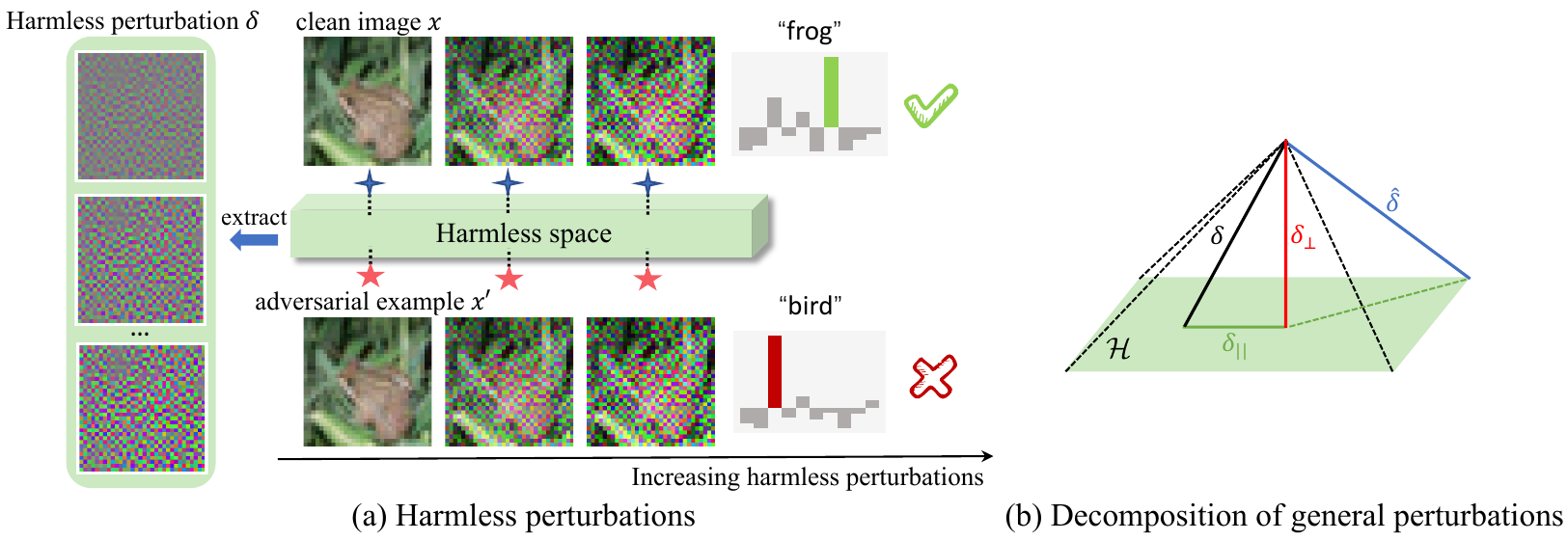}
\vskip -0.1in
\caption{(a) Harmless perturbations added to images completely do not change the network output of the images, regardless of the magnitude of these harmless perturbations. (b) Illustration of the equivalent effect of any perturbation on the network output. Given any linear layer with a harmless subspace $\mathcal{H}$, the network outputs of any perturbations $\delta$ and $\hat{\delta}$ are equivalent to those of their components $\delta_{\bot}$ orthogonal to the harmless subspace.}
\label{Fig:harmless_perturbations}
\end{center}
\vskip -0.2in
\end{figure*}



The existence of harmless perturbations and their space promotes several potential benefits. First, capitalizing on the disparity between DNNs and human perception, \textit{i.e.}, significant perturbations perceivable by the human eye may not affect the recognition of DNNs, we delve into the application of harmless perturbations to privacy-preserving data and model fingerprints. 
Additionally, as demonstrated in Fig.~\ref{Fig:harmless_perturbations}(a), there exist equivalent adversarial spaces, ensuring equal attacking capabilities for adversarial perturbations regardless of their magnitude. In other words, the perturbation magnitude is not a decisive factor in attacking the network. Instead, focusing on the attack utility of the ``effective component" of the perturbation facilitates a deeper understanding of the robustness of DNNs. In summary, this paper makes the following contributions:
\begin{itemize}
    \item We demonstrate for the first time the concept of ``harmless perturbations" and show the existence of a harmless perturbation space for DNNs. For any linear layer with the input dimension $n$ exceeding the output dimension $m$, there exists a continuous harmless perturbation subspace of dimension $(n-m)$. The harmless perturbation space \textit{may} expand when considering non-linear layers, depending on the properties of the layers and input samples. 
    \item We present a novel perspective to decompose any general perturbation (\textit{i.e.,} random noises or adversarial perturbations) into its harmful and harmless counterparts. Given any linear layer with a harmless perturbation subspace, the network output solely depends on its orthogonal (harmful) component, irrespective of its magnitude (innocuous) part.

    \item We reveal the difference between DNNs and human perception, \emph{i.e.}, significant perturbations captured by humans may not affect the recognition of DNNs, which highlights a distinctive aspect of DNN robustness. 
    Based on this insight, we employ the proposed harmless perturbations with a large magnitude to hide the sensitive image data for DNN usage. As harmless perturbations are usually not transferable across different DNNs, they can also serve as model fingerprints.  
\end{itemize}
\section{Related work}

\textbf{Adversarial examples and adversarial robustness.}
Existing literature extensively explored the impact of adversarial perturbations~\citep{szegedy2013intriguing} on the robustness of DNNs, including their ability to deceive both the digital and physical scenarios~\citep{kurakin2017adversarial}, fool both the white-box models~\citep{goodfellow2015explaining,madry2017towards} and black-box models~\citep{papernot2017practical, chen2017zoo}, and manifest as either image-specific or image-agnostic universal perturbations~\citep{moosavi2017unadversarial}. Many defenses against these adversarial perturbations have been proposed but they were susceptible to being broken by more powerful or adapted attacks~\citep{carlini2017towards, athalye2018obfuscated}. Amongst them, adversarial training~\citep{madry2017towards} and its variant~\citep{zhang2019theoretically} still indicated their relatively reliable robustness against more powerful attack~\citep{croce2020reliable}.

\noindent {\bf Adversarial space.} Previous studies have delved into the vulnerability of DNNs from the perspective of high-dimensional input spaces.~\citet{goodfellow2015explaining} argued that the ``highly linear" of DNNs explained their instability to adversarial perturbations.~\citet{fawzi2016robustness} quantified the robustness of classifiers from the dimensionality of subspaces within the semi-random noise regime.~\citet{justin2018spheres} suggested that adversarial perturbations arised from the high-dimensional geometry of data manifolds.~\citet{florian2017space} stated that adversarial transferability arised from the intersection of high-dimensional adversarial subspaces from different models.~\citep{Shafahi2020inevitable} empirically discussed that how dimensionality affected the robustness of classifiers to adversarial perturbations.

\textbf{Unrecognizable features.}
A series of prior works~\citep{geirhos2019imagenet, ilyas2019adversarial, tsipras2019robustness, jacobsen2019excessive, yin2019fourier, wang2020high} have demonstrated that humans and DNNs tend to utilize different features to make decisions. Besides, producing totally unrecognizable images~\citep{nguyen2015deep} or introducing visually perceptible patches to images~\citep{salman2021unadversarial, wang2022defensive, si2023angelic} may not alter the classification categories of DNNs.





\section{The space of harmless perturbations}
We develop a framework to rigorously define ``harmless'' and ``harmful'' perturbations \textit{w.r.t.} the network output. In particular, we formally define and solve for 
the subspace for harmless perturbations in any linear layer of a given DNN. Subsequently, the definitions and solutions are extended to non-linear layers by analyzing the properties of the functions. 

\subsection{Harmless perturbations for a linear layer} \label{sec:poc}
Consider a function mapping $\mathcal{L}: \mathbb{R}^n \mapsto \mathbb{R}^m$ on an input sample $x\in\mathbb{R}^n$, the goal is to find a set of input perturbations $\delta\in\mathbb{R}^n$ that rigorously do not change the output of the function. To this end, we first give the definition of harmless perturbations as follows.


\begin{definition}[Harmless perturbations] \label{def:set_of_harmless_perturbations}
The set of harmless perturbations for a function $\mathcal{L}$ is defined as $\mathcal{S} \coloneqq \{\delta | \mathcal{L}(x+\delta) = \mathcal{L}(x)\}$ subject to $\|\delta\|_{p}<\xi$, $\xi > 0$.
\end{definition}
\cref{def:set_of_harmless_perturbations} denotes a set of input perturbations that \textit{thoroughly} do not affect the function output. 
According to \cref{def:set_of_harmless_perturbations}, the set of harmless perturbations for a linear function $\mathcal{L}(x) = Ax$, where $A\in\mathbb{R}^{m\times n}$ is the parameter matrix, can be formulated as $\mathcal{S}=\{\delta|A(x+\delta)=Ax\}=\{\delta|A\delta=\mathbf{0}\}$. It indicates that the set of harmless perturbations for a single linear layer $\mathcal{L}$ is equivalent to the \textit{nullspace} of the parameter matrix $A$, \textit{i.e.}, $\mathcal{S} = N(A)=\{v\in\mathbb{R}^n| Av = \mathbf{0}\}$.

\begin{theorem}[Dimension of harmless perturbation subspace] \label{theorem1} 
Given a linear layer $\mathcal{L}(x) = Ax \in \mathbb{R}^{m}$ and an input sample $x\in\mathbb{R}^{n}$, where the parameter matrix $A\in\mathbb{R}^{m\times n}$. The dimension of the subspace for harmless perturbations is $dim(\mathcal{S}) = n-rank(A)$.
\end{theorem}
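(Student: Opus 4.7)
The plan is to recognize that this theorem is essentially a restatement of the classical rank--nullity theorem applied to the linear map $A$, once we identify the set $\mathcal{S}$ with (a neighborhood of the origin in) the null space $N(A)$. First I would unpack the definition: since $\mathcal{L}(x+\delta)=Ax+A\delta$ and $\mathcal{L}(x)=Ax$, the condition $\mathcal{L}(x+\delta)=\mathcal{L}(x)$ is equivalent to $A\delta=\mathbf{0}$, so $\mathcal{S}=\{\delta\in\mathbb{R}^n : A\delta=\mathbf{0}\}\cap\{\delta : \|\delta\|_p<\xi\}=N(A)\cap B_p(0,\xi)$. Because $N(A)$ is a linear subspace of $\mathbb{R}^n$ and the open ball $B_p(0,\xi)$ is a nonempty open neighborhood of the origin, their intersection spans exactly $N(A)$, and so by the standard convention $\dim(\mathcal{S}):=\dim(\operatorname{span}\,\mathcal{S})=\dim N(A)$.

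Next I would invoke the rank--nullity theorem for the linear map $A:\mathbb{R}^n\to\mathbb{R}^m$, which gives
\begin{equation*}
\dim N(A)+\dim R(A)=n,
\end{equation*}
where $R(A)$ denotes the column space (range) of $A$. Since $\dim R(A)=\operatorname{rank}(A)$ by definition, rearranging yields $\dim N(A)=n-\operatorname{rank}(A)$, which combined with the identification above gives $\dim(\mathcal{S})=n-\operatorname{rank}(A)$, as claimed.

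For completeness I would briefly justify the rank--nullity step from first principles if the paper's appendix style calls for a self-contained proof: pick a basis $\{v_1,\ldots,v_k\}$ of $N(A)$ (with $k=\dim N(A)$), extend it to a basis $\{v_1,\ldots,v_k,v_{k+1},\ldots,v_n\}$ of $\mathbb{R}^n$, and show that $\{Av_{k+1},\ldots,Av_n\}$ is a basis of $R(A)$ (linear independence follows because any dependence would force a nonzero vector in $N(A)\cap\operatorname{span}\{v_{k+1},\ldots,v_n\}=\{0\}$; spanning is immediate). This gives $\operatorname{rank}(A)=n-k$, equivalently $k=n-\operatorname{rank}(A)$.

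There is no real obstacle here; the only mildly subtle point is interpreting $\dim(\mathcal{S})$ when $\mathcal{S}$ is defined as a bounded subset rather than a subspace, and this is handled by the observation that $\mathcal{S}$ is an open neighborhood of the origin inside the linear subspace $N(A)$, so its linear span (and hence its dimension in the intended sense) is exactly $N(A)$. If desired, the bound $\xi$ could even be sent to infinity without changing the statement, since harmlessness of $\delta$ under the linear map $A$ is preserved by arbitrary scaling.
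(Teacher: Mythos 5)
Your proof is correct and takes essentially the same route as the paper's: identify $\mathcal{S}$ with the nullspace $N(A)$ via $\mathcal{L}(x+\delta)=\mathcal{L}(x)\iff A\delta=\mathbf{0}$, then apply the rank--nullity theorem. You are in fact slightly more careful than the paper in addressing the $\|\delta\|_p<\xi$ constraint (which the paper's proof silently drops) and in offering a self-contained justification of rank--nullity, but the core argument is identical.
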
  
Theorem~\ref{theorem1} demonstrate that the subspace for harmless perturbations is the span of $dim(\mathcal{S})$ linearly independent vectors $U\subset \mathcal{S}$, \textit{i.e.}, $\mathcal{S} = span(U)=\{\sum_{i=1}^{dim(\mathcal{S})}c_iu_i | c_i\in \mathbb{R}, u_i \in U\}$ (Proof is in~\cref{appx:theorem1}).
As a special case, the parameter matrix $A$ of a linear layer in DNNs learned through an optimization algorithm (\textit{e.g.}, SGD) starting from an arbitrary initialization, usually possesses linearly independent row vectors. So the dimension of the harmless perturbation subspace for a linear layer $\mathcal{L}(x) = Ax \in \mathbb{R}^{m}$ is $dim(\mathcal{S}) = n-m$.

\begin{remark}[Proof in~\cref{appx:remark2}]\label{remark2} Consider the case that the input dimension of the linear layer is less than or equal to the output dimension, \textit{i.e.}, $n\le m$. In this case, if the column vectors of the parameter matrix $A$ are linearly independent, then the dimension of the subspace for harmless perturbations is $dim(\mathcal{S}) =0$.
\end{remark}

\cref{remark2} state that there exists \textit{no} (non-zero) harmless perturbation that does not affect the output of the linear layer when $n\le m$ and $rank(A) = n$.

\subsection{The space of harmless perturbations for DNNs} \label{sec:non_linear}
Extending harmless perturbations from a single linear layer to the entire DNN is challenging. Consider a DNN $f: \mathbb{R}^{n_\text{in}} \mapsto \mathbb{R}^{n_\text{out}}$ on an input sample $x\in \mathbb{R}^{n_\text{in}}$, the goal now is to identify a set of harmless input perturbations $\delta\in\mathbb{R}^{n_\text{in}}$ which ultimately do not alter the network output. Notice that harmless perturbations solved for the intermediate layers do not influence subsequent layers. Therefore, we can formally define the set of harmless perturbations layer by layer for a given DNN.

\begin{definition}[Set of harmless perturbations for DNNs] \label{def:set_of_harmless_perturbations_features}
    The set of harmless perturbations on the $\!(l\!+\!1\!)$-th layer of a DNN $f$ is defined as $\mathcal{H}^{(l)} \coloneqq \{\delta^{(l)} | f^{(l+1)}(z^{(l)}+\delta^{(l)}) = f^{(l+1)}(z^{(l)})\}$.
\end{definition}
$z^{(l)}\in \mathbb{R}^{n^{(l)}}$ in~\cref{def:set_of_harmless_perturbations_features} represents the $l$-th intermediate-layer features of the input sample $x$, and $\delta^{(l)}$ denotes the perturbations added to the features $z^{(l)}$. \cref{def:set_of_harmless_perturbations_features} shows that if the set of harmless perturbations on the features can be found, these perturbations leave the network output unaffected. Furthermore, if we identify a set of perturbations on the input $\mathcal{P}^{(l)}\coloneqq \{\delta|  z^{(l)}+\delta^{(l)} = (f^{(l)}\circ\cdots\circ f^{(1)})(x+\delta), \forall \delta^{(l)}\in \mathcal{H}^{(l)} \}$ such that $\delta^{(l)} \in \mathcal{H}^{(l)}$, then $\mathcal{P}^{(l)}$ do not alter the network output.



\begin{lemma}[Proof in~\cref{appx:lemma1}] \label{lemma:space_of_harmless_perturbations}
    The set of harmless perturbations on the input for a DNN $f$ with $L$ layers is derived as $\mathcal{P} = \bigcup_{l=0}^{L-1} \mathcal{P}^{(l)}$
    , $\mathcal{P}^{(0)} \coloneqq \mathcal{H}^{(0)}$, $\mathcal{P}\subset \mathbb{R}^{n_\text{in}}$.
\end{lemma}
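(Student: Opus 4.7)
The plan is to establish the equality $\mathcal{P} = \bigcup_{l=1}^{L-1} \mathcal{P}^{(l)}$ by a pair of set inclusions, using only the compositional structure $f = f^{(L)} \circ f^{(L-1)} \circ \cdots \circ f^{(1)}$ and the fact that each layer $f^{(k)}$ is a deterministic function. A key observation upfront is that the reverse inclusion is already saturated by the single choice $l = L-1$, so the union is genuinely a union of subsets of $\mathcal{P}$, with $\mathcal{P}^{(L-1)}$ itself equal to $\mathcal{P}$.

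For the forward inclusion $\bigcup_{l=1}^{L-1} \mathcal{P}^{(l)} \subseteq \mathcal{P}$, I would take an arbitrary $\delta \in \mathcal{P}^{(l)}$ for some $l$, let $\delta^{(l)} := (f^{(l)} \circ \cdots \circ f^{(1)})(x+\delta) - z^{(l)}$ be the effective perturbation induced at the $l$-th layer features, and invoke the definition of $\mathcal{P}^{(l)}$ to conclude $\delta^{(l)} \in \mathcal{H}^{(l)}$. Definition~\ref{def:set_of_harmless_perturbations_features} then gives $f^{(l+1)}(z^{(l)} + \delta^{(l)}) = f^{(l+1)}(z^{(l)})$, and applying the fixed (and deterministic) composition $f^{(L)} \circ \cdots \circ f^{(l+2)}$ to both sides yields $f(x+\delta) = f(x)$, i.e., $\delta \in \mathcal{P}$.

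For the reverse inclusion $\mathcal{P} \subseteq \bigcup_{l=1}^{L-1} \mathcal{P}^{(l)}$, I would take $\delta \in \mathcal{P}$ and set $\delta^{(L-1)} := (f^{(L-1)} \circ \cdots \circ f^{(1)})(x+\delta) - z^{(L-1)}$. Since $f(x+\delta) = f(x)$ and $f = f^{(L)} \circ (f^{(L-1)} \circ \cdots \circ f^{(1)})$, unraveling the composition gives $f^{(L)}(z^{(L-1)} + \delta^{(L-1)}) = f^{(L)}(z^{(L-1)})$, which is precisely the membership condition $\delta^{(L-1)} \in \mathcal{H}^{(L-1)}$. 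Hence $\delta \in \mathcal{P}^{(L-1)} \subseteq \bigcup_{l=1}^{L-1} \mathcal{P}^{(l)}$, completing the proof.

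The main obstacle is not analytical but notational: the definition of $\mathcal{P}^{(l)}$ written in the text ties the input perturbation $\delta$ to a feature-level perturbation $\delta^{(l)}$ that is itself determined by the forward pass on $x+\delta$, and I need to be careful to interpret ``$\delta^{(l)} \in \mathcal{H}^{(l)}$'' as a statement about this induced perturbation rather than a universal quantifier. Once that interpretation is fixed, everything reduces to elementary function composition and the determinism of each layer of the network.
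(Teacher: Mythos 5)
Your proof is correct and follows essentially the same route as the paper's: both establish the two set inclusions, with the forward direction propagating the layer-$(l+1)$ equality through the remaining deterministic composition. The only difference is in the reverse inclusion, where you argue directly that $\mathcal{P}=\mathcal{P}^{(L-1)}$ (so the union is saturated by its last term) while the paper phrases the same fact as a proof by contradiction; your reading of the awkward ``$\forall\delta^{(l)}\in\mathcal{H}^{(l)}$'' quantifier in the definition of $\mathcal{P}^{(l)}$ as a membership condition on the induced feature perturbation matches the paper's intended usage.
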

Lemma~\ref{lemma:space_of_harmless_perturbations} suggests that the set of harmless input perturbations for the entire DNN is the union of the corresponding set of harmless input perturbations $\mathcal{P}^{(l)}$ on each layer. Theoretically,~\cref{lemma:space_of_harmless_perturbations} does not restrict whether any layer in the DNN is linear or nonlinear, \textit{i.e.}, given any layer, if $\mathcal{H}^{(l)}$ and $\mathcal{P}^{(l)}$ can be evaluated, then harmless input perturbations for this layer can still be obtained.
Based on~\cref{lemma:space_of_harmless_perturbations}, we further investigate the effect of a single layer of nonlinearity on the harmless perturbation space. In scenarios involving non-linear layers, the harmless perturbation space \textit{may} expand, depending on the specific non-linear functions and input samples.

\setcounter{lemma}{0} 
\counterwithin*{lemma}{part}
\renewcommand{\thelemma}{1.\arabic{lemma}}

\begin{lemma}[Harmless perturbations for injective functions] \label{lemma:injective_function}
If the $\!(l\!+\!1\!)$-th layer $f^{(l+1)}$ is an injective function, the set of harmless perturbations on the $\!(l\!+\!1\!)$-th layer of a DNN $f$ is $\mathcal{H}^{(l)} = \{\mathbf{0}\}$. Otherwise, $\mathcal{H}^{(l)} \ne \{\mathbf{0}\}$. (Proof is in~\cref{appx:injective_function})
\end{lemma}

\begin{lemma}[Harmless perturbations for ReLU layers] \label{lemma:relu_layer}
    Suppose $f^{(l+1)}$ is the \text{\rm ReLU} layer, $\mathcal{H}^{(l)} = \{ \delta^{(l)}|\forall i, \delta^{(l)}_i =\begin{cases} 0,& z^{(l)}_i>0\\ t(\forall t\le-z^{(l)}_i),&  z^{(l)}_i\le0 \end{cases} $$\}$, which is determined by intermediate-layer features $z^{(l)}$ and hence the input sample $x$. (Proof is in~\cref{appx:relu_layer})
\end{lemma}

\begin{lemma}[Harmless perturbations for Softmax layers] \label{lemma:softmax_layer}
    Suppose $f^{(l+1)}$ is the \text{\rm Softmax} layer,  $\mathcal{H}^{(l)}=\{c\cdot\mathbf{1}, c\in\mathbb{R}\}$. (Proof is in~\cref{appx:softmax_layer})
\end{lemma}

\begin{lemma}[Harmless perturbations for Average Pooling layers] \label{lemma:average_pooling_layer}
    Suppose $f^{(l+1)}$ is the \text{\rm Average Pooling} layer, $\mathcal{H}^{(l)}=N(A_{\text{\rm avg}})$. $A_{\text{\rm avg}}$ is a coefficient matrix determined by the constraints that must be satisfied by the perturbations within each averaging region. (Proof is in~\cref{appx:average_pooling_layer})
\end{lemma}

\begin{lemma}[Harmless perturbations for Max Pooling layers] \label{lemma:max_pooling_layer}
    Suppose $f^{(l+1)}$ is the \text{\rm Max Pooling} layer, $\mathcal{H}^{(l)}=\{\forall p,i, \delta^{(l)}_{p,i} \le c_p-z^{(l)}_{p,i}\} \cap \{\forall p, \prod_{j=1}^{k\times k} (\delta^{(l)}_{p,j}-c_p+z^{(l)}_{p,j}) = 0\}$. $c_p \coloneqq  \textit{\rm max} \{z^{(l)}_{p,1}, z^{(l)}_{p,2}, \cdots, z^{(l)}_{p,k\times k} \}$ is the maximum value of features within the $k\times k$ region of the $p$-th patch. $\mathcal{H}^{(l)}$ is determined by intermediate-layer features $z^{(l)}$ and hence the input sample $x$. (Proof is in~\cref{appx:max_pooling_layer})
\end{lemma}

\begin{theorem}[Harmless perturbations for two-layer neural networks] \label{theorem_non_linear} 
Given a two-layer neural network $f(x) = \sigma(Ax)$, where $\sigma$ represents any function. If $\sigma$ is an injective function, the set of harmless perturbations on the input $\mathcal{P}$ for $f$ is $\mathcal{P} = \mathcal{P}^{(0)}$. Otherwise, $\mathcal{P} = \mathcal{P}^{(0)}\cup \mathcal{P}^{(1)}\supseteq \mathcal{\mathcal{P}}^{(0)}$. Here, $\mathcal{P}^{(1)}=\{\delta|A\delta = \delta^{(1)}, \forall \delta^{(1)}\in\mathcal{H}^{(1)}\cap C(A)\}$\footnote{Note that the equation $A\delta = \delta^{(1)} (\delta \ne \mathbf{0})$ has a solution (meaning at least one solution) if and only if $\delta^{(1)}$ is in the column space of $A$, \textit{i.e.}, $\delta^{(1)}\in C(A)$.} is determined by the specific function $\sigma$ and the input sample $x$. (Proof is in \cref{appx:theorem_non_linear}) 
\end{theorem} 
\cref{theorem_non_linear} suggests that the property of the function $\sigma$ determines whether the set of harmless perturbations for $Ax$ may expand. For instance, if $\sigma$ is an injective function, such as Sigmoid, Tanh, leaky ReLU~\citep{maas2013rectifier}, exponential linear
unit (ELU)~\citep{clevert2016fast} and scaled exponential linear unit (SeLU)~\citep{klambauer2017self-normalizing} activation functions, and the linear Batch Normalization (BN) layers at inference time~\citep{Ioffe2015BN}, the set of harmless perturbations on the input $\mathcal{P}$ 
remains unchanged, compared to that of $Ax$. Conversely, if $\sigma$ is a non-injective function, such as ReLU~\citep{nair2010ReLU}, Softmax, Average Pooling~\citep{Yann1990AveragePooling}, and Max Pooling layers~\citep{Scherer2010Evaluation} (see~\cref{lemma:relu_layer,lemma:softmax_layer,lemma:average_pooling_layer,lemma:max_pooling_layer} and~\cref{theorem_non_linear} for their $\mathcal{P}$, respectively), the set of harmless perturbations on the input $\mathcal{P}$ \textit{may} expand $\mathcal{P}\supseteq \mathcal{P}^{(0)}$, depending on the specific functions and input samples. Note that, in the above non-linear layers, the harmless perturbation space for the ReLU layer is determined by the input sample $x$. In an extreme case, if every element of $Ax$ is positive, then its harmless perturbation subspace $\mathcal{P} = \mathcal{P}^{(0)}$. Otherwise, if every element of $Ax$ is not positive, $\mathcal{P} = \mathcal{P}^{(0)}\cup \mathcal{P}^{(1)}\supseteq \mathcal{\mathcal{P}}^{(0)}$. (For more details, please refer to~\cref{lemma:relu_layer} in~\cref{appx:relu_layer}). 
In summary, the harmless perturbation space on the input \textit{does} expand $\mathcal{P}\supsetneq \mathcal{P}^{(0)}$ if there exists at least one harmless perturbation $\delta^{(1)}\in \mathcal{H}^{(1)}\cap C(A) (\delta^{(1)}\ne \mathbf{0})$ for the non-injective function $\sigma$. 

\begin{lemma}[Harmless perturbations for two-layer linear networks] \label{lemma:two_layer_neural_networks}
    Given a two-layer linear network $f(x) = A_2A_1x$,  $\mathcal{P} = \mathcal{P}^{(0)}\cup \mathcal{P}^{(1)}\supseteq \mathcal{P}^{(0)} $. Here, $\mathcal{P}^{(0)}=N(A_1)$ and $\mathcal{P}^{(1)}=\{\delta|A_1\delta = \delta^{(1)},\forall \delta^{(1)}\in N(A_2)\cap C(A_1)\}$. (Proof is in~\cref{appx:average_pooling_layer})
\end{lemma}

Furthermore, \cref{lemma:two_layer_neural_networks} illustrates 
the expansion of harmless perturbations on the input $\mathcal{P}$ solely depends on the dimensions of those two linear layers. For two common scenarios in DNNs, where given $A_1\in\mathbb{R}^{d \times n}$ and $A_2\in\mathbb{R}^{m \times d}$, when $n,m>d$, $\mathcal{P} = \mathcal{P}^{(0)}$. Otherwise, when $n,m<d$, $\mathcal{P} = \mathcal{P}^{(1)}$. (Please see~\cref{lemma:two_layer_neural_networks} in~\cref{appx:average_pooling_layer} for the details.)

\begin{figure*}[t]
\centering
\vskip -0.2in
\subfigure[Convolutional layer]{\includegraphics[width=0.49\textwidth]{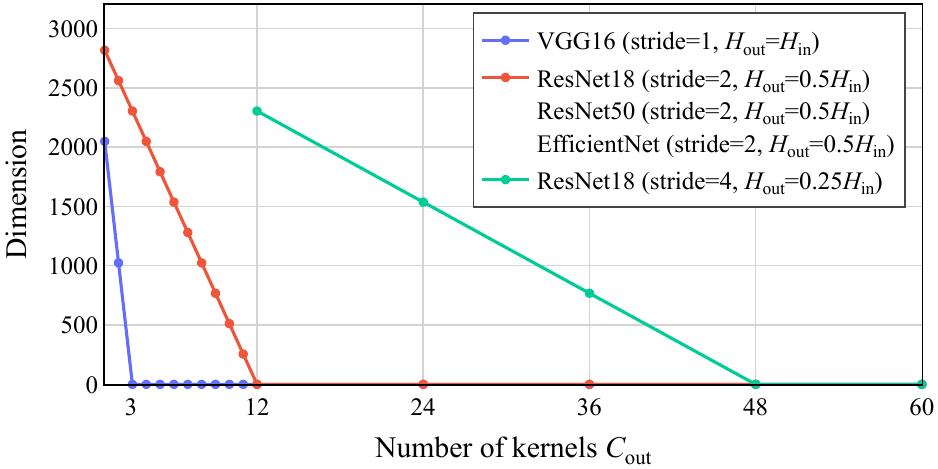}\label{fig:corollaries_harmless_perturbation_conv}}
\subfigure[Fully-connected layer]{\includegraphics[width=0.49\textwidth]{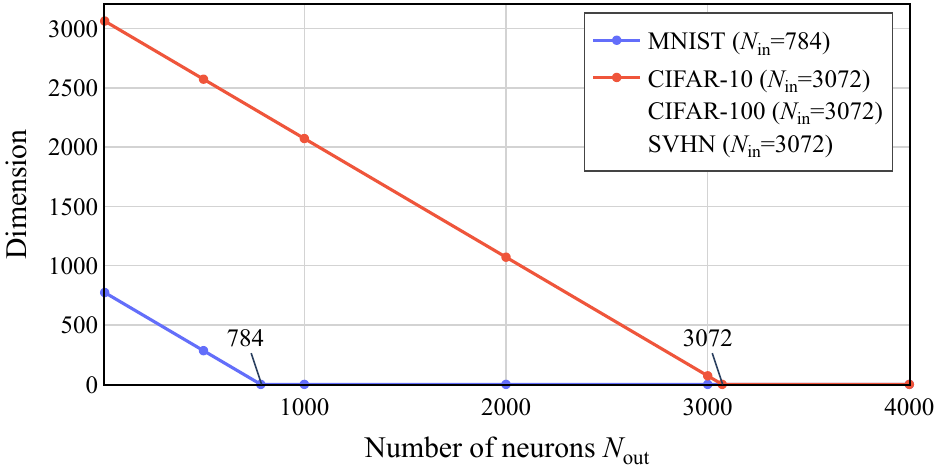}\label{fig:corollaries_harmless_perturbation_fc}}
\vskip -0.15in
\caption{Dimension of harmless perturbation subspace for (a) convolutional layers and (b) fully-connected layers. When the input dimension $n$ of the linear layer is larger than the output dimension $m$, the dimension of the harmless subspace is $(n-m)$. Otherwise, the dimension is 0. 
}

\vskip -0.25in
\label{Fig:corollaries_harmless_perturbation}
\end{figure*}

\subsection{The subspace of harmless perturbations for linear layers in DNNs}
\label{subsec:subspace_harmless_perturbations_for_linear_layers}

Nevertheless, in this section, we focus on the set of harmless perturbations for two classical linear layers in DNNs, \textit{i.e.}, convolutional layers and fully-connected layers. 

\begin{corollary}[Harmless perturbation subspace for convolutional layers, proof in~\cref{appx:corollary1}] \label{corollary:convolutional_layer} Given a convolutional layer $f^{(l+1)}$ with linearly independent vectorized kernels whose kernel size is not smaller than the stride,  $z^{(l+1)}=f^{(l+1)}(z^{(l)})\in\mathbb{R}^{C_{\text{\rm out}} \times H_{\text{\rm out}} \times W_{\text{\rm out}}}$ and $z^{(l)}\in \mathbb{R}^{C_{\text{\rm in}} \times H_{\text{\rm in}} \times W_{\text{\rm in}}}$. If the input dimension is greater than the output dimension, then the dimension of the subspace for harmless perturbations is $dim(\mathcal{H}^{(l)}) = C_{\text{\rm in}}  H_{\text{\rm in}}  W_{\text{\rm in}} - C_{\text{\rm out}}  H_{\text{\rm out}} W_{\text{\rm out}}$. Otherwise, $\mathcal{H}^{(l)} = \{\mathbf{0}\}$.
\end{corollary}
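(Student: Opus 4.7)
The plan is to reduce the corollary to Theorem~\ref{theorem1} by rewriting the convolutional layer as a single matrix-vector product. Flattening $z^{(l)}$ into a column vector $x \in \mathbb{R}^n$ with $n = C_{\text{in}} H_{\text{in}} W_{\text{in}}$ and the output into $y \in \mathbb{R}^m$ with $m = C_{\text{out}} H_{\text{out}} W_{\text{out}}$, the convolution becomes $y = Ax$ for a sparse matrix $A \in \mathbb{R}^{m \times n}$ whose rows are shifted, zero-padded copies of the vectorized kernels, placed at the input indices of each output location's receptive field. Once this reduction is in place, Theorem~\ref{theorem1} gives $\dim(\mathcal{H}^{(l)}) = n - \mathrm{rank}(A)$, and both claimed formulas follow from Remarks~\ref{remark1}--\ref{remark2} once $A$ is shown to have full row rank (when $n > m$) or full column rank (when $n \le m$).

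The main work is establishing this rank property. I would first group the $m$ rows of $A$ by output spatial position $(i,j)$ into $H_{\text{out}}W_{\text{out}}$ blocks, each of size $C_{\text{out}} \times n$, whose nonzero columns lie in the receptive field of $(i,j)$ and whose nonzero entries are the $C_{\text{out}}$ vectorized kernels. The linear-independence-of-kernels hypothesis immediately yields full row rank $C_{\text{out}}$ within each block. For independence across positions, the hypothesis ``kernel size $\ge$ stride'' ensures that consecutive receptive fields overlap but collectively cover the input without gaps; crucially, for each output position $(i,j)$ there is an input coordinate that belongs to its receptive field but not to the receptive field of any later position in lexicographic order. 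This supports a triangular elimination argument: processing positions in lex order, each block introduces a new ``pivot'' column that all later blocks leave untouched, forcing the row blocks to be linearly independent of one another. Combined with intra-block independence, this yields $\mathrm{rank}(A)=m$. The $n \le m$ case is handled symmetrically by proving full column rank of $A$, yielding $\mathcal{H}^{(l)} = \{\mathbf{0}\}$ through Remark~\ref{remark2}.

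I expect the main obstacle to be making the cross-position independence argument fully rigorous in two spatial dimensions. My plan is to prove the 1D case first---a block-Toeplitz-like matrix assembled from shifted linearly-independent kernel blocks with stride at most the kernel size---via the pivot-elimination argument above, and then lift to 2D by observing that the 2D convolution matrix inherits a Kronecker-style block structure from its independent row and column slidings, so that its rank factors through the corresponding 1D result. A minor bookkeeping step will be to handle padding (and boundary output positions) without losing the pivot property; the natural convention is that padding only zero-pads columns of $A$ and never affects row-rank.
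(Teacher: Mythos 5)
Your first step---flattening the convolution into a single matrix--vector product $y=Ax$ and invoking Theorem~\ref{theorem1} with Remarks~\ref{remark1} and~\ref{remark2}---is exactly the paper's route; the paper builds the same sparse equivalent matrix $A$ in its appendix. The divergence is that the paper simply \emph{asserts} that $A$ has linearly independent rows (resp.\ columns), deferring to a genericity remark about learned parameters, whereas you attempt to \emph{derive} full row rank from the stated hypotheses. That derivation contains a genuine gap, and it cannot be closed, because the rank claim is false under those hypotheses alone. Concretely, take a one-dimensional layer with $C_{\text{in}}=2$, spatial length $3$, kernel width $2$, stride $1$, no padding, and $C_{\text{out}}=2$ kernels with vectorizations $k^{(1)}=(1,0,1,0)$ and $k^{(2)}=(0,1,0,1)$ (weight $1$ on the left pixel of both input channels, resp.\ on the right pixel of both). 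The kernels are linearly independent, the kernel size exceeds the stride, and $n=6>m=4$; yet the four rows of $A$ compute $x_{1,1}+x_{2,1}$, $x_{1,2}+x_{2,2}$, $x_{1,2}+x_{2,2}$, $x_{1,3}+x_{2,3}$, so $\mathrm{rank}(A)=3$ and $\dim\mathcal{H}^{(l)}=3\neq n-m=2$. The failure occurs precisely at your pivot step: eliminating the coefficients attached to the first output position requires the kernels \emph{restricted to that position's private columns} to remain linearly independent, and independence of the full vectorized kernels does not imply this (here the restriction of $k^{(2)}$ to the private columns $\{(1,1),(2,1)\}$ is the zero vector). The ``symmetric'' column-rank claim for $n\le m$ fails for the same reason: with one input channel, stride $1$, and kernels $(1,-1,0)$, $(0,1,-1)$ on a length-$4$ input, both kernels annihilate the constant signal, so $\mathcal{H}^{(l)}\neq\{\mathbf{0}\}$ even though $n\le m$ and the kernels are independent. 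Any complete proof must add a genericity (full-rank) assumption on $A$ itself, which is what the paper's construction implicitly does.

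Two further points. First, the proposed lift from 1D to 2D via a ``Kronecker-style block structure'' is only valid for separable kernels; the equivalent matrix of a general 2D convolution is not a Kronecker product of 1D convolution matrices, so the 2D rank does not factor through the 1D result. Second, your remark that padding ``only zero-pads columns of $A$'' is not right either: zero padding does not add columns to $A$ (the unknowns are still the $C_{\text{in}}H_{\text{in}}W_{\text{in}}$ input entries); it changes which kernel entries appear in the boundary rows, which is exactly where pivot entries can vanish. If you restate the result with the hypothesis ``the equivalent matrix $A$ has full row (resp.\ column) rank'' in place of ``linearly independent vectorized kernels,'' your argument collapses to the paper's and is correct.
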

Corollary~\ref{corollary:convolutional_layer} demonstrates the subspace for harmless perturbations in a convolutional layer is the span of $dim(\mathcal{H}^{(l)})$ linearly independent vectors $U\subset \mathcal{H}^{(l)}$. 
Specifically, $\mathcal{H}^{(l)}$ can be obtained by computing the nullspace of a matrix $A\in \mathbb{R}^{(C_{\text{\rm out}}  H_{\text{\rm out}} W_{\text{\rm out}})\times  (C_{\text{\rm in}}  H_{\text{\rm in}}  W_{\text{\rm in}})}$. 
In practice, $A$ is affected by the padding and the stride of the convolutional layer (see~\cref{appx:equivalent_matrix} for details). 
Similarly, given a fully-connected layer $z^{(l+1)}= W^\top z^{(l)}\in \mathbb{R}^{N_{\text{\rm out}}}$ and $z^{(l)}\in \mathbb{R}^{N_{\text{\rm in}}}$, 
the harmless subspace is the span of $dim(\mathcal{H}^{(l)}) = N_{\text{\rm in}} - N_{\text{\rm out}}$ linearly independent vectors $U\subset \mathcal{H}^{(l)}$ (see~\cref{appx:corollary2} in~\cref{appx:corollary1}). Here, $\mathcal{H}^{(l)}$ is computed as the nullspace of a matrix $A=W^\top$.

Experiments on various DNNs verify Corollaries~\ref{corollary:convolutional_layer} and~\ref{appx:corollary2}. 
In~\cref{Fig:corollaries_harmless_perturbation}, the dimension of the harmless perturbation subspace $dim(\mathcal{H}^{(l)})$ decreased as the output dimension increased.
When the output dimension exceeds the input dimension, $dim(\mathcal{H}^{(l)})$ becomes 0. 
Specifically, we verified the dimension of the harmless perturbation subspace for convolutional layers using various DNNs, including ResNet-18/50~\citep{he2016deep}, VGG-16~\citep{simonyan2014very} and EfficientNet~\citep{tan2019efficientnet}, on the CIFAR-10 dataset~\citep{krizhevsky2009learning}. Here, we modified the feature size of the output of the first convolutional layer by setting different strides (see~\cref{appx:verifying_dimension}). Furthermore, we verified the dimension of the harmless perturbation subspace for fully-connected layers using the MLP-5 on various datasets, including the MNIST dataset~\citep{LeCun:2010}, the CIFAR-10/100 dataset~\citep{krizhevsky2009learning} and the SHVN dataset~\citep{netzer2011reading}, to compare the dimension of the subspace under different input dimensions.

Conversely, there exists \textit{no} (non-zero) perturbation making the network output invariant, if the input dimension of a given linear layer is not greater than the output dimension. 
However, the least harmful perturbation can be solved for such that the layer output is minimally affected, \textit{i.e.}, given the matrix $A$ with equivalent effect of a linear layer,  the least harmful perturbation is $(\delta^{(l)})^\ast = {\rm arg min}_{\delta^{(l)}}\| A \delta^{(l)}\|_2$, \textit{s.t.,} $\|\delta^{(l)}\|_2=1$. Hence, the least harmful perturbation $(\delta^{(l)})^\ast$ is the eigenvector corresponding to the smallest eigenvalue of the matrix $A^\top A$ (see~\cref{lemma:least_harmful} in~\cref{appx:average_pooling_layer}).

We validated the impact of harmless perturbations and the least harmful perturbations on network performance across varying perturbation magnitudes. 
In~\cref{fig:corollaries_harmless_perturbation_conv}, 
harmless perturbations, regardless of their magnitude, do not affect the discrimination of DNNs (see~\cref{appx:evaluating_performance} for details). For the least harmful perturbations in~\cref{fig:corollaries_harmless_perturbation_fc}, 
they also have negligible effects on the network
performance, compared with the Gaussian noise $\mathcal{N}(0,1)$ added to each pixel. 
Furthermore, we evaluated the root mean squared error (RMSE) between the network outputs of the perturbed images $\hat{y}_x$ and the network outputs of natural images $y_x$ on the ResNet-50, \textit{i.e.},  RMSE=$\mathbb{E}_x [\frac{1}{\sqrt{n}}\vert\vert \hat{y}_x - y_x\vert\vert]$.~\cref{tab:rmse_of_perturbations} further demonstrates that compared to adversarial perturbations and Gaussian noise, harmless perturbations completely did not change the network output with negligible
errors, and the least harmful perturbation had a weak impact on the network output as the perturbation magnitude increased.


\begin{figure*}[t]
\vskip -0.2in
\centering
\subfigure[Harmless perturbation]{\includegraphics[width=0.495\textwidth]{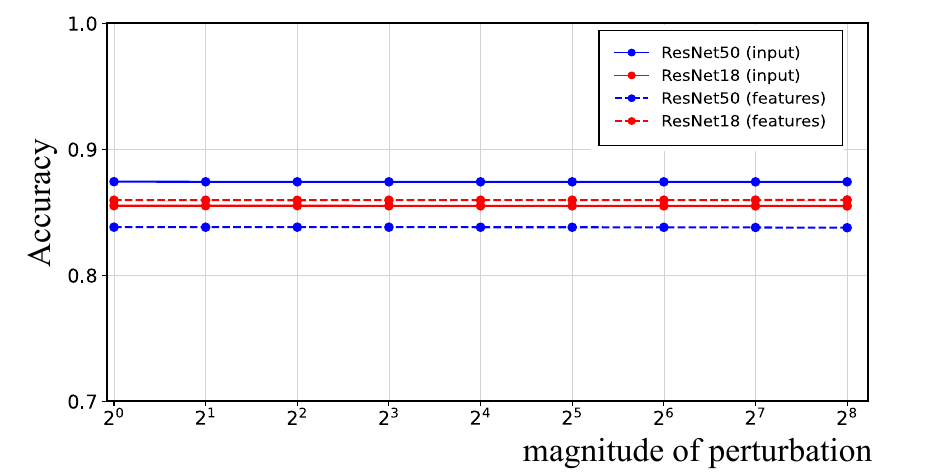}\label{fig:corollaries_harmless_perturbation_conv}}
\subfigure[Least harmful perturbation]{\includegraphics[width=0.495\textwidth]{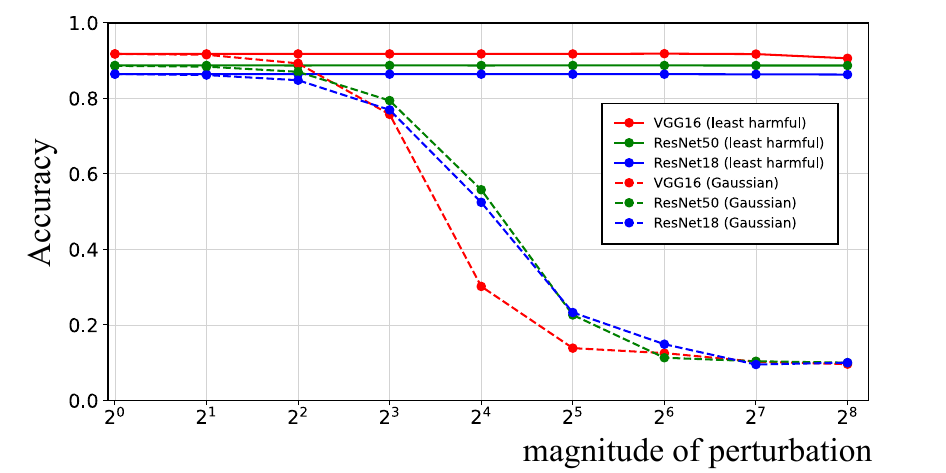}\label{fig:corollaries_harmless_perturbation_fc}}
\vskip -0.1in
\caption{The effect of perturbation magnitude on the performance of the network. 
We trained the CIFAR-10 dataset on various networks and tested the effect of varying magnitudes on (a) harmless perturbations and (b) the least harmful perturbations.}
\vskip -0.1in
\label{Fig:accuracy_of_harmless_perturbation}
\end{figure*}

\begin{table*}[t]
\vskip -0.1in
\caption{Root mean squared errors between the network outputs of the perturbed images and original images on the CIFAR-10 dataset.}
\label{tab:rmse_of_perturbations}
\begin{center}
\begin{small}
\resizebox{1.0\linewidth}{!}{
\begin{tabular}{lcccccc}
\toprule
 & $\epsilon$ & $2\epsilon$ & $4\epsilon$ & $8\epsilon$ & $16\epsilon$ & $32\epsilon$\\
\midrule
Gaussian noise & 0.1226 & 0.3154 & 0.8112 & 1.7871 & 3.3921 & 5.0009 \\
Adversarial perturbation~\citep{madry2017towards} & 6.1994 & 6.3225 & 5.5410 & 5.6122 & 6.6585 & 11.2747  \\
\midrule
Harmless perturbation  &\textbf{3.63e-15} & \textbf{3.70e-15} & \textbf{3.77e-15} & \textbf{4.20e-15} & \textbf{5.38e-15} & \textbf{8.55e-15} \\
Least harmful perturbation  &0.0003 & 0.0007 & 0.0013 & 0.0027 & 0.0053 & 0.0105 \\
\bottomrule
\end{tabular}}
\end{small}
\end{center}
\vskip -0.3in
\end{table*}

\section{Projection onto the harmless subspace}
\label{sec:projection_onto_subspace}

Inspired by the harmless subspace of linear layers, we can decompose any given perturbation (\textit{i.e.,} random noise, adversarial examples) into its two orthogonal counterparts, namely, harmful and harmless components. 
This section extends the harmless subspace to any given perturbations and investigates the projections of these perturbations onto their corresponding harmless subspaces.
\begin{theorem}[Arbitrary decomposition of perturbations, proof in~\cref{appx:theorem2}] \label{theorem2} Given the $(l+1)$-th linear layer with harmless subspace $\mathcal{H}^{(l)}\ne \{\mathbf{0}\}$ and any perturbation $\forall \delta^{(l)}\notin \mathcal{H}^{(l)}$, it can be arbitrarily decomposed into the sum of a harmless perturbation and a harmful perturbation, \textit{i.e.}, $\delta^{(l)} = \delta^{(l)}_a + \delta^{(l)}_b, \forall \delta^{(l)}_a \in \mathcal{H}^{(l)}$ and $\delta^{(l)}_b \notin \mathcal{H}^{(l)}$. Then, $ f^{(l+1)}(\delta^{(l)})= f^{(l+1)}(\delta^{(l)}_b)$. 
\end{theorem}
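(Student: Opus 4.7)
The plan is to exploit linearity of $f^{(l+1)}$ together with the characterization $\mathcal{H}^{(l)} = N(A)$ established in \cref{sec:poc}. Because $f^{(l+1)}(z) = Az$ is linear, any additive decomposition $\delta^{(l)} = \delta^{(l)}_a + \delta^{(l)}_b$ immediately gives
\begin{equation*}
f^{(l+1)}(\delta^{(l)}) = A\delta^{(l)}_a + A\delta^{(l)}_b,
\end{equation*}
so the claim reduces to showing that the first summand vanishes. This is exactly \cref{lemma:linear_output_harmless_perturbations}: since $\delta^{(l)}_a \in \mathcal{H}^{(l)} = \{v \mid Av = \mathbf{0}\}$, we have $A\delta^{(l)}_a = \mathbf{0}$, and hence $f^{(l+1)}(\delta^{(l)}) = A\delta^{(l)}_b = f^{(l+1)}(\delta^{(l)}_b)$.

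The only thing that actually requires care is justifying the word \emph{arbitrary} in the statement, i.e., that the decomposition is not unique and can be chosen freely. First, I would note that because $\mathcal{H}^{(l)} \neq \{\mathbf{0}\}$ by hypothesis, one may pick \emph{any} $\delta^{(l)}_a \in \mathcal{H}^{(l)}$ and then define $\delta^{(l)}_b \coloneqq \delta^{(l)} - \delta^{(l)}_a$; this gives a valid decomposition in $\mathbb{R}^{n^{(l)}}$. Second, I would verify the side condition $\delta^{(l)}_b \notin \mathcal{H}^{(l)}$: if instead $\delta^{(l)}_b \in \mathcal{H}^{(l)}$, then since $\mathcal{H}^{(l)}$ is a linear subspace (it is the nullspace of $A$), we would obtain $\delta^{(l)} = \delta^{(l)}_a + \delta^{(l)}_b \in \mathcal{H}^{(l)}$, contradicting the assumption $\delta^{(l)} \notin \mathcal{H}^{(l)}$. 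This confirms that the decomposition produced is of the required form for every admissible choice of $\delta^{(l)}_a$.

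There is really no deep obstacle here; the result is essentially a direct consequence of linearity plus the nullspace characterization of $\mathcal{H}^{(l)}$. If any step deserves a small amount of attention, it is only the routine verification that the harmful component $\delta^{(l)}_b$ automatically inherits non-membership in $\mathcal{H}^{(l)}$ from the subspace structure, which as shown above is a one-line argument by contradiction.
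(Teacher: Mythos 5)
Your proposal is correct and follows essentially the same route as the paper's proof: both establish $f^{(l+1)}(\delta^{(l)}) = f^{(l+1)}(\delta^{(l)}_b)$ by linearity of $A$ together with $A\delta^{(l)}_a = \mathbf{0}$, and both verify $\delta^{(l)}_b \notin \mathcal{H}^{(l)}$ by contradiction. The only cosmetic difference is that you phrase the contradiction via closure of the subspace under addition ($\delta^{(l)}_a + \delta^{(l)}_b \in \mathcal{H}^{(l)}$), while the paper applies $A$ and derives $A\delta^{(l)} = \mathbf{0}$; these are the same argument since $\mathcal{H}^{(l)} = N(A)$.
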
  

\vskip -0.1in

Theorem~\ref{theorem2} indicates that 
the network output of any perturbation $\delta^{(l)}\notin \mathcal{H}^{(l)}$  is equivalent to that of its corresponding harmful component $\delta^{(l)}_b\coloneqq (\delta^{(l)} - \delta^{(l)}_a)\notin \mathcal{H}^{(l)}, \forall \delta^{(l)}_a \in \mathcal{H}^{(l)}$, \textit{no matter how large the $\ell_p$ norm of harmful component is}.\setcounter{lemma}{2} 
\renewcommand{\thelemma}{\arabic{lemma}}According to~\cref{theorem2}, an infinite number of perturbations, regardless of their magnitude, will induce the equivalence of a continuous harmful space\footnote{Note that the harmful space is not a linear subspace of $\mathbb{R}^{n^{(l)}}$, since it does not contain $\mathbf{0} \in \mathbb{R}^{n^{(l)}}$.}.
Naturally, an inquiry arises: what is the extent of these perturbations concerning a given DNN? Theorem~\ref{theorem3} extends the argument by establishing the existence of a \textit{unique} perturbation characterized by the smallest $\ell_2$ norm (see the proof in~\cref{appx:orthogonal_decomposition}). This perturbation is orthogonal to the harmless subspace, and exhibits network output consistent with the above infinite number of perturbations embedded in the continuous harmful space (Figure~\ref{Fig:harmless_perturbations}(b)).

\begin{theorem}[Orthogonal decomposition of perturbations, proof in~\cref{appx:theorem3}] \label{theorem3} Given the $(l+1)$-th linear layer with harmless subspace and any perturbation $\forall \delta^{(l)}\notin \mathcal{H}^{(l)}$, it has a unique decomposition $\delta^{(l)} = \delta^{(l)}_{\parallel} + \delta^{(l)}_{\bot}$with the parallel component $\delta^{(l)}_{\parallel} = P\delta^{(l)} \in \mathcal{H}^{(l)} $ and the  orthogonal component $\delta^{(l)}_{\bot} = (I-P)\delta^{(l)} \notin \mathcal{H}^{(l)}$. Then, $f^{(l+1)}(\delta^{(l)}_{\parallel})= \mathbf{0}$ and $f^{(l+1)}(\delta^{(l)}) = f^{(l+1)}(\delta^{(l)}_{\bot})$.
\end{theorem}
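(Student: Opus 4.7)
The plan is to invoke the standard orthogonal-complement decomposition for the finite-dimensional Euclidean space $\mathbb{R}^{n^{(l)}}$ relative to the linear subspace $\mathcal{H}^{(l)}$, and then use linearity of $f^{(l+1)}$ together with \cref{lemma:linear_output_harmless_perturbations} to read off the output identities. The key observation that makes this work is that $\mathcal{H}^{(l)} = N(A)$ is a genuine linear subspace (closed under addition and scalar multiplication), so the projection theorem applies without qualification.

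First I would set $P$ to be the orthogonal projector onto $\mathcal{H}^{(l)}$; concretely, if $U_H \in \mathbb{R}^{n^{(l)} \times \dim(\mathcal{H}^{(l)})}$ collects an orthonormal basis of $\mathcal{H}^{(l)}$, then $P = U_H U_H^{\top}$ satisfies $P^2 = P = P^{\top}$, $\mathrm{range}(P) = \mathcal{H}^{(l)}$, and $\mathrm{range}(I-P) = (\mathcal{H}^{(l)})^{\perp}$. Writing $\delta^{(l)}_{\parallel} = P\delta^{(l)}$ and $\delta^{(l)}_{\bot} = (I-P)\delta^{(l)}$ gives $\delta^{(l)} = \delta^{(l)}_{\parallel} + \delta^{(l)}_{\bot}$ with $\delta^{(l)}_{\parallel} \in \mathcal{H}^{(l)}$ and $\delta^{(l)}_{\bot} \in (\mathcal{H}^{(l)})^{\perp}$. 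Uniqueness follows in the usual way: if $\delta^{(l)} = u_1 + v_1 = u_2 + v_2$ are two such decompositions, then $u_1 - u_2 = v_2 - v_1$ lies in $\mathcal{H}^{(l)} \cap (\mathcal{H}^{(l)})^{\perp} = \{\mathbf{0}\}$.

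Next I would verify $\delta^{(l)}_{\bot} \notin \mathcal{H}^{(l)}$. Because $\delta^{(l)} \notin \mathcal{H}^{(l)}$ by hypothesis, $\delta^{(l)}_{\bot}$ cannot be $\mathbf{0}$ (otherwise $\delta^{(l)} = \delta^{(l)}_{\parallel} \in \mathcal{H}^{(l)}$, a contradiction). Since $\delta^{(l)}_{\bot}$ is a nonzero vector in $(\mathcal{H}^{(l)})^{\perp}$ and $\mathcal{H}^{(l)} \cap (\mathcal{H}^{(l)})^{\perp} = \{\mathbf{0}\}$, it cannot lie in $\mathcal{H}^{(l)}$. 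Applying \cref{lemma:linear_output_harmless_perturbations} to $\delta^{(l)}_{\parallel} \in \mathcal{H}^{(l)}$ yields $f^{(l+1)}(\delta^{(l)}_{\parallel}) = A\delta^{(l)}_{\parallel} = \mathbf{0}$, and linearity of $f^{(l+1)}(z) = Az$ gives
\begin{equation*}
f^{(l+1)}(\delta^{(l)}) = A\delta^{(l)}_{\parallel} + A\delta^{(l)}_{\bot} = \mathbf{0} + f^{(l+1)}(\delta^{(l)}_{\bot}) = f^{(l+1)}(\delta^{(l)}_{\bot}).
\end{equation*}

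There is no serious obstacle here: the result is essentially a packaging of the orthogonal projection theorem combined with \cref{lemma:linear_output_harmless_perturbations}, and is in fact just the special (canonical) instance of the arbitrary decomposition in \cref{theorem2} with $\delta^{(l)}_a = P\delta^{(l)}$ chosen orthogonally. The only point demanding any care is the bookkeeping that establishes $\delta^{(l)}_{\bot} \notin \mathcal{H}^{(l)}$ and the uniqueness of the decomposition, both of which reduce to the triviality of $\mathcal{H}^{(l)} \cap (\mathcal{H}^{(l)})^{\perp}$.
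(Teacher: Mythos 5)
Your proof is correct and follows essentially the same route as the paper's: construct the orthogonal projector onto $\mathcal{H}^{(l)}$ from an orthonormal basis, deduce uniqueness from the triviality of $\mathcal{H}^{(l)} \cap (\mathcal{H}^{(l)})^{\perp}$, and read off the output identities from linearity of $A$ together with $A\delta^{(l)}_{\parallel}=\mathbf{0}$. Your explicit check that $\delta^{(l)}_{\bot}\notin\mathcal{H}^{(l)}$ (via $\delta^{(l)}_{\bot}\ne\mathbf{0}$) is in fact slightly more careful than the paper's, which asserts this by appeal to \cref{theorem2}.
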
 
\vskip -0.05in
$P= U(U^\top U)^{-1}U^\top$ represents the projection matrix onto the harmless subspace $\mathcal{H}^{(l)}\subset\mathbb{R}^{n^{(l)}}$, and $U\in\mathbb{R}^{n^{(l)}\times dim(\mathcal{H}^{(l)})}$denotes a set of $dim(\mathcal{H}^{(l)})$ orthogonal bases for the subspace $\mathcal{H}^{(l)}$. 

As a special case of~\cref{theorem2},~\cref{theorem3} demonstrates that the network output of a family of features/perturbations is equivalent to that of the component of this perturbation family, which is orthogonal to the subspace.
In essence, as expounded in~\cref{theorem4}, a collection of perturbations can be categorized as a perturbation family with identical impact on the network output, if their orthogonal components exhibit congruence in both magnitude and direction.


\begin{theorem}[Identical impact of a family of perturbations, proof in~\cref{appx:theorem4}] \label{theorem4} Given the $(l+1)$-th linear layer with harmless subspace and two different perturbations $\forall \delta^{(l)}\ne \hat{\delta}^{(l)}$ and $\delta^{(l)}, \hat{\delta}^{(l)} \notin \mathcal{H}^{(l)}$, if their orthogonal components are the same, \textit{i.e.}, $\delta_{\bot}^{(l)} = \hat{\delta}_{\bot}^{(l)}$, then $f^{(l+1)}(\delta^{(l)}) =  f^{(l+1)}(\hat{\delta}^{(l)})$.
\end{theorem}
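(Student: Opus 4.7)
The plan is to treat Theorem~\ref{theorem4} as a direct corollary of Theorem~\ref{theorem3}, since once we know every perturbation produces the same $(l+1)$-th layer output as its orthogonal component, equality of orthogonal components trivially forces equality of layer outputs.

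Concretely, first I would invoke Theorem~\ref{theorem3} twice: once for $\delta^{(l)}$ and once for $\hat{\delta}^{(l)}$. Since both perturbations lie outside $\mathcal{H}^{(l)}$ and $\mathcal{H}^{(l)} \neq \{\mathbf{0}\}$, the theorem guarantees unique decompositions $\delta^{(l)} = \delta^{(l)}_{\parallel} + \delta^{(l)}_{\bot}$ and $\hat{\delta}^{(l)} = \hat{\delta}^{(l)}_{\parallel} + \hat{\delta}^{(l)}_{\bot}$, with $\delta^{(l)}_{\parallel}, \hat{\delta}^{(l)}_{\parallel} \in \mathcal{H}^{(l)}$. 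Moreover, Theorem~\ref{theorem3} yields $f^{(l+1)}(\delta^{(l)}) = f^{(l+1)}(\delta^{(l)}_{\bot})$ and $f^{(l+1)}(\hat{\delta}^{(l)}) = f^{(l+1)}(\hat{\delta}^{(l)}_{\bot})$.

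Next, I would use the hypothesis $\delta^{(l)}_{\bot} = \hat{\delta}^{(l)}_{\bot}$. Substituting this into the two equalities above gives $f^{(l+1)}(\delta^{(l)}) = f^{(l+1)}(\delta^{(l)}_{\bot}) = f^{(l+1)}(\hat{\delta}^{(l)}_{\bot}) = f^{(l+1)}(\hat{\delta}^{(l)})$, which is exactly the desired conclusion. Note that the fact that $\delta^{(l)} \neq \hat{\delta}^{(l)}$ combined with $\delta^{(l)}_{\bot} = \hat{\delta}^{(l)}_{\bot}$ forces $\delta^{(l)}_{\parallel} \neq \hat{\delta}^{(l)}_{\parallel}$, which is consistent with both parallel components living inside the nontrivial harmless subspace $\mathcal{H}^{(l)}$.

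There is essentially no hard step here: the entire argument is a two-line substitution, and all the nontrivial work (existence and uniqueness of the orthogonal decomposition via the projector $P = U(U^\top U)^{-1} U^\top$, and the vanishing $f^{(l+1)}(\delta^{(l)}_{\parallel}) = \mathbf{0}$) has already been discharged in Theorem~\ref{theorem3} and Lemma~\ref{lemma:linear_output_harmless_perturbations}. The only care I would take is to state explicitly that the hypothesis $\delta^{(l)}, \hat{\delta}^{(l)} \notin \mathcal{H}^{(l)}$ is what allows Theorem~\ref{theorem3} to apply in the stated form; if either perturbation lay inside $\mathcal{H}^{(l)}$ its orthogonal component would be $\mathbf{0}$ and the conclusion would follow even more directly from Lemma~\ref{lemma:linear_output_harmless_perturbations}, so the theorem in fact holds without the exclusion as well.
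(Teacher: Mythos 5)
Your proposal is correct and follows exactly the paper's own argument: apply Theorem~\ref{theorem3} to each of $\delta^{(l)}$ and $\hat{\delta}^{(l)}$ to reduce their layer outputs to those of their orthogonal components, then chain the equalities through the hypothesis $\delta_{\bot}^{(l)} = \hat{\delta}_{\bot}^{(l)}$. The extra observations you add (the forced inequality of the parallel components, and the fact that the exclusion $\delta^{(l)}, \hat{\delta}^{(l)} \notin \mathcal{H}^{(l)}$ is not essential) are sound but not needed.
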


Theorem~\ref{theorem4} posits that \textit{when a set of features/perturbations lies equidistant to the harmless subspace and exhibits the same direction in their orthogonal components}, these perturbations form a family that induces uniform network effects. These perturbations can be analogized to form contour lines in a topographic map, as these perturbations with the same orthogonal components yield the same network output (\cref{Fig:example_of_theorems}). Notably, this effect remains consistent irrespective of the perturbation magnitude. Furthermore, when the orthogonal components of any two perturbations have different directions, \textit{i.e.}, $\delta_{\bot}^{(l)} \ne \alpha \cdot \hat{\delta}_{\bot}^{(l)} (\alpha \in \mathbb{R})$, then the layer outputs are inconsistent $f^{(l+1)}(\delta^{(l)}) \ne  f^{(l+1)}(\hat{\delta}^{(l)})$ (\cref{lemma:different_direction_perturbation} in~\cref{appx:relu_layer}). It is also implied that orthogonal components with the same magnitude but different directions do not necessarily corrupt the network to the same output.
We believe that the perturbation decomposition approach presented in this work allows us to re-examine the intriguing properties of adversarial examples by decomposing the perturbations into their harmful and harmless counterparts.

\section{Applications of harmless perturbations}
\label{sec:applications}
\subsection{Privacy protection}
\label{privacy_protection}

We first consider a scenario where users may require employing a pre-trained model on a third-party server to analyze data containing sensitive information (\textit{e.g.,} facial, medical, and credit data)~\citep{schick2023Toolformer, shen2023HuggingGPT,wu2023VisualChatGPT,liang2023TaskMatrix}. Specifically, either the third-party server or the user provides a pre-trained model, enabling the user to access the network parameters. Subsequently, the user locally generates privacy-preserving data using the available parameters, and then deploys the protected data, along with the network, to the third-party server. 
To alleviate information leakage from sensitive data, harmless perturbations with sufficiently large magnitudes can be incorporated to original samples. This process renders the generated samples unrecognizable to humans, effectively obscuring sensitive information within the images, without compromising network performance.

To be specific, our goal is to generate a visually unrecognizable image, denoted as $\hat{x}\in \mathbb{R}^{n_{\text{in}}}$, to substitute the original image $x$, ensuring that its network output is identical with that of the original image $x$. 
Specifically, given a DNN with a harmless perturbation subspace  $\mathcal{H}^{(0)}\subset\mathbb{R}^{n_{\text{in}}}$ in its first linear layer, and a set of orthonormal bases $\{u_1, u_2,\cdots, u_{d}\}$ of the subspace $\mathcal{H}^{(0)}$ ($d=dim(\mathcal{H}^{(0)})$), visually unrecognizable harmless perturbations can simply be generated by maximizing the dissimilarity between the original image $x$ and the generated image $\hat{x}\coloneqq x + \sum_{i=1}^{d}c_iu_i, c_i\in\mathbb{R}, u_i\in\mathbb{R}^{n_{\text{in}}}$. Without loss of generality,
we quantify the difference between the two images using the Mean Squared Error (MSE), \textit{i.e.}, $\max_{\{c_1, c_2,\cdots, c_{d}\}} \frac{1}{n_{\text{in}}}\Vert \hat{x} - x \Vert_2^2$. 
To make the pixels of the generated image in the range $[0,1]$, we add two penalties on the pixels out of bounds, \textit{i.e.}, $\sum_i|\mathbbm{1}(\hat{x}_i<0)\cdot \hat{x}_i| + |\mathbbm{1}(\hat{x}_i>1)\cdot \hat{x}_i|$, as shown in Figure~\ref{Fig:harmless_perturbations}(a). 

\textbf{Recovering original images}. Reconstructing the original image $x$ from the generated image $\hat{x}$ is a challenging task even if the attacker can access network parameters.
Since the parameter matrix $A$ uniquely determine the harmless perturbation subspace, it is equivalent to specifying the subspace $\mathcal{H}^{(0)}$. However, according to~\cref{theorem2}, the generated image $\hat{x} \notin \mathcal{H}^{(0)}$ can be decomposed into the sum of an \textit{infinite} number harmless components $\hat{\delta} \in \mathcal{H}^{(0)}$ (Figure~\ref{Fig:harmless_perturbations}(b)) and reconstructed images $x^{\text{recon}} \coloneqq \hat{x} - \hat{\delta}, \forall \hat{\delta} \in \mathcal{H}^{(0)}$. Therefore, the original image cannot be uniquely determined when the magnitude and direction of the harmless perturbation are unknown.

\begin{wraptable}{r}{6.5cm}
\vskip -0.1in
\caption{Perceptual similarity between the perturbed images and the original images on the CIFAR-10 dataset.}
\label{tab:ssim_lpips}
\begin{center}
\resizebox{0.45\columnwidth}{!}{
\begin{tabular}{lccc}
\toprule
 & Harmless perturbation  & Gaussian noise\\
\midrule
SSIM ($\downarrow$) & \textbf{0.4719} & 0.6825 \\
LPIPS ($\uparrow$)& 0.2031 & \textbf{0.3007}  \\
$\Delta$accuracy ($\downarrow$) & \textbf{0.00\%} & 32.46\% \\
\bottomrule
\end{tabular}}
\end{center}
\vskip -0.1in
\end{wraptable}

\textbf{Visual indistinguishability.} To quantify the capability of the generated images in preserving privacy for human perception, we evaluated the perceptual similarity using two similarity metrics, \textit{i.e.},  the Structural Similarity Index (SSIM)~\citep{wang2004ssim} and the Learned Perceptual Image Patch Similarity (LPIPS)~\citep{zhang2018lpips} metrics. Besides, we evaluated the degradation in classification performance of the generated images, compared to the original images. We compared the privacy-preserving capability of the generated harmless perturbations with the Gaussian noise $\mathcal{N}(0,0.1^2)$ added to each pixel on the ResNet-50.~\cref{tab:ssim_lpips} shows that the generated harmless perturbations achieved a similar level of privacy preservation as Gaussian noise, but the harmless perturbations completely did not change the DNN's discrimination of images.


\subsection{Model fingerprint}
\begin{wrapfigure}{r}{6.5cm}
\centering
\vskip -0.1in
\includegraphics[width=1.0\linewidth]{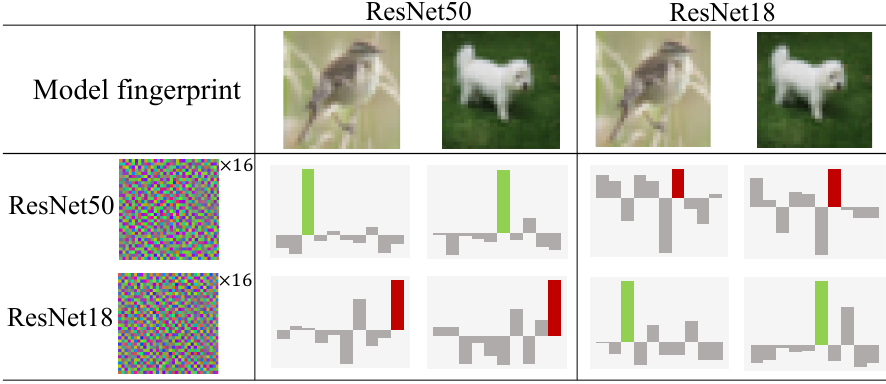}
\vskip -0.1in
\caption{Harmless perturbations (magnified by 16x) can serve as identity fingerprint for models, allowing for tracking changes in closed-source models. }
\label{Fig:model_fingerprint}
\vskip -0.1in
\end{wrapfigure}
Harmless perturbations also can be used for model fingerprints~\citep {finlayson2024logits,zeng2024human} to faithfully reflect the model's changes, as they are determined by the parameter space of the DNN. 
We demonstrate this usage by considering the simple case of establishing identity fingerprints for two DNNs. ~\cref{Fig:model_fingerprint} illustrates the network's response when adding two different harmless perturbations extracted from two distinct models on input images. 
Incorporating significant harmless perturbations generated by one model into various input samples preserves the outputs of that model, while applying them to input samples of another model leads to significant changes in the outputs of another model. This demonstrates harmless perturbations can potentially serve as model fingerprints.

{\bf Transferability of harmless perturbations.} Typically, given two DNNs with different parameters, their harmless perturbation spaces are not equal, \textit{i.e.}, $\mathcal{P}_1\ne \mathcal{P}_2$. However, there may exist few harmless perturbations that are transferable and serve as harmless perturbations for both DNNs, \textit{i.e.}, $\delta \in \mathcal{P}_1  \cap \mathcal{P}_2$. 
To avoid choosing those rare transferable harmless perturbations as model fingerprints, we constraint the sampling of model fingerprints solely from 
the non-intersecting harmless perturbation spaces of the two DNNs, satisfying $\delta_1 \in \mathcal{P}_1 - (\mathcal{P}_1 \cap \mathcal{P}_2)$ and $\delta_2 \in \mathcal{P}_2 - (\mathcal{P}_1 \cap \mathcal{P}_2)$.


\subsection{The intriguing properties of DNNs from harmless perturbations}
\label{subsec:intriguing_properties}


\textbf{Seeing is not always believing.} Surprisingly, we find that \textit{distances within the feature space may exhibit considerable variation between DNNs and human perception}. Human perceptual systems tend to discern non-equivalence when the magnitude of the perturbation added to a feature significantly exceeds the feature's magnitude. In contrast, \textit{DNNs tend to disregard the magnitude of features/perturbations}. 
DNNs are completely unaffected by such harmless perturbations, highlighting a distinctive aspect of DNN robustness. Furthermore, harmless perturbations invalidate distance-based similarity metrics, such as the widely used Euclidean and cosine distances~\citep{Mensink2013distance-based,zhang2018lpips}. 
For example, two vectors sampled from harmless/equivalent space, regardless of their magnitude or direction, may deemed dissimilar through these similarity metrics, yet deep networks still regard them as identical. Consequently, there arises a necessity to reassess whether these similarity metrics faithfully reflect the true modelling of similarity by deep networks.

\begin{figure}[t!]
\centering
\includegraphics[width=1.0\linewidth]{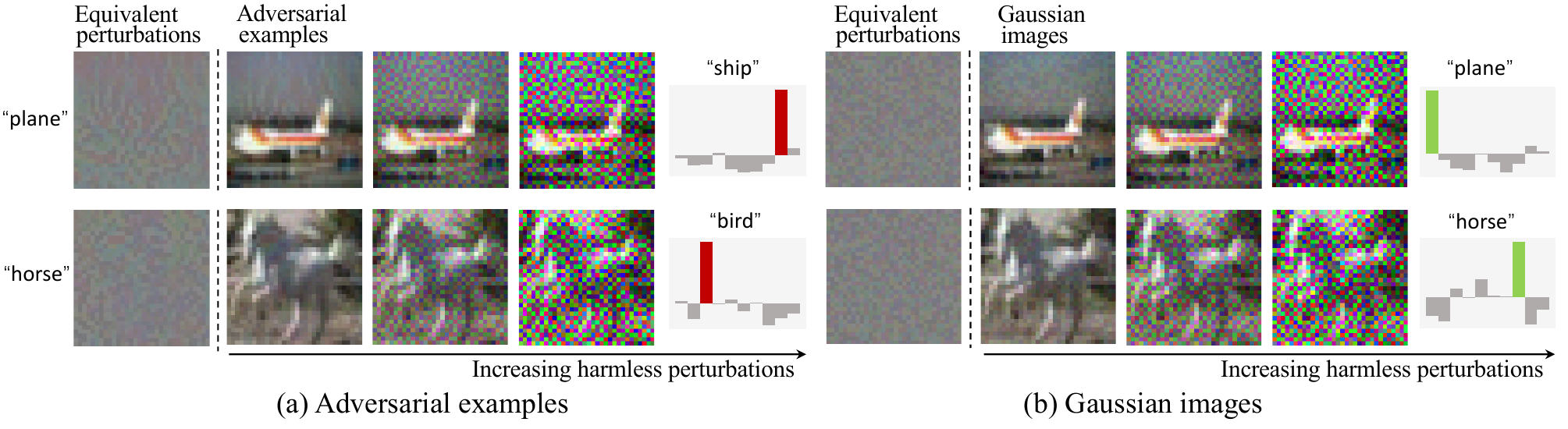}
\vskip -0.1in
\caption{Adding Harmless perturbations to images with different noises ((a) adversarial perturbations (b) Gaussian noises) completely do not change the network output of the perturbed images, regardless of the magnitude of these harmless perturbations. Perturbed images incorporating harmless perturbations of arbitrary magnitude, drawn from the equivalent perturbation space, exhibit an effect on the network output that is equivalent to the impact of images with equivalent (orthogonal) perturbations.}
\label{Fig:adversarial_gaussian}
\vskip -0.2in
\end{figure}


\textbf{Equivalent adversarial spaces.} 
We revisit the impact of perturbation range/magnitude~\citep{madry2017towards, wang2019convergence} of adversarial examples on the DNNs. ~\cref{theorem2,theorem3} demonstrate that \textit{infinitely large, infinitely numerous} features/perturbations are equivalent to their components orthogonal to the harmless subspace. 
Therefore, for any perturbation, \textit{there exist equivalent (adversarial) perturbation spaces}, ensuring equal attacking capabilities for perturbations.
Compared to the Gaussian noises in~\cref{Fig:adversarial_gaussian}(b), the adversarial perturbations which have similar perturbation magnitudes in~\cref{Fig:adversarial_gaussian}(a), lead to completely different attack utilities.  
Interestingly, all adversarial perturbations for each sample in~\cref{Fig:adversarial_gaussian}(a) have the same attack utility, irrespective of their magnitudes. 
The equivalent adversarial spaces imply that: 1) \textit{the perturbation magnitude is not a decisive factor attacking the network}, and 2) attention should be paid to the ``effective components" of perturbations, 
\textit{i.e.,} we can further decompose the perturbation in a more fine-grained way.
We believe that further exploration of the equivalent space helps to understand the robustness of DNNs.






\section{Conclusion}\label{sec:conclusion}
In this paper, we show the existence of harmless perturbations and their subspaces. Such harmless perturbations, regardless of their magnitude, render the network output completely unaltered. Essentially, the harmless perturbation space arises from the usage of non-injective functions within DNNs. We prove that for any linear layer in a DNN where the input dimension $n$ exceeds the output dimension $m$, there exists a continuous harmless subspace with a dimension of $(n-m)$. We further show the existence of the feature/perturbation space characterized by identical orthogonal components, consistently influencing the network output. Besides, the harmless perturbation space may expand when involving non-linear layers. Our work reveals that DNNs tend to disregard the magnitude of features/perturbations, which highlights a distinctive aspect of DNN robustness. Based on this insight, we utilize the proposed harmless perturbations for hiding sentitive data and model fingerprints.

{\small
\bibliographystyle{abbrvnat}
\bibliography{neurips_2024}
}







\appendix

\section{Proofs of Theorems and Remarks}
In this section, we prove the theorems and remarks in the paper.

\textbf{Theorem 1} (Dimension of harmless perturbation subspace  for a linear layer) 
\label{appx:theorem1}
Given a linear layer $\mathcal{L}(x) = Ax\in\mathbb{R}^{m}$ and an input sample $x\in\mathbb{R}^{n}$, where the parameter matrix $A\in\mathbb{R}^{m\times n}$. The dimension of the subspace for harmless perturbations is $dim(\mathcal{S}) = n-rank(A)$.
\begin{proof} 

The rank-nullity theorem in linear algebra states that, for an $m\times n$ matrix $A$, which represents a linear map $\mathcal{L}: \mathbb{R}^{n} \to \mathbb{R}^{m}$, the number of columns of a matrix $A$
is the sum of the rank of $A$ and the nullity of $A$, \textit{i.e.},
\begin{equation}
rank(A) + nullity(A) = n
\end{equation}
where the rank of $A$ is the dimension of the column space of $A$, \textit{i.e.}, $rank(A) = dim(C(A))$, and the nullity of $A$ is the dimension of the nullspace of $A$, \textit{i.e.}, $nullity(A) = dim(N(A))$.

Here, the harmless perturbation subspace for a linear layer $\mathcal{L}$ is $\mathcal{S}=\{\delta\in\mathbb{R}^{n}|A(x+\delta)=Ax\}=\{\delta|A\delta=\mathbf{0}\} = N(A)$, where the nullspace of the matrix $A$ is $N(A) \coloneqq \{v\in\mathbb{R}^n| Av = \mathbf{0}\}$.

Thus, the dimension of the subspace for harmless perturbations is $dim(\mathcal{S}) = dim(N(A)) = nullity(A) = n-rank(A)$.

\end{proof}

\textbf{Remark 1} 
\label{appx:remark2}
Consider the case that the input dimension of the linear layer is less than or equal to the output dimension, \textit{i.e.}, $n\le m$. In this case, if the column vectors of the parameter matrix $A$ are linearly independent, then the dimension of the subspace for harmless perturbations is $dim(\mathcal{S}) =0$.
\begin{proof} 

According to~\cref{theorem1}, the dimension of the subspace for harmless perturbations is $dim(\mathcal{S}) = n-rank(A)$.

The rank of $A\in\mathbb{R}^{m\times n}$ satisfies $rank(A)\le \min(m,n)$. Consider the case that $n \le m$, then $rank(A)\le n$.

If the column vectors of $A$ are linearly independent, then $rank(A) = n$.

Thus, the dimension of the subspace for harmless perturbations is $dim(\mathcal{S}) = n-n = 0$.

\end{proof}

\textbf{Remark 2} 
\label{appx:remark1}
Consider the case that the input dimension of the linear layer is greater than the output dimension, \textit{i.e.}, $n>m$. In this case, if the row vectors of the parameter matrix $A$ are linearly independent, then the dimension of the subspace for harmless perturbations is $dim(\mathcal{S}) = n-m$.
\begin{proof} 
According to~\cref{theorem1}, the dimension of the subspace for harmless perturbations is $dim(\mathcal{S}) = n-rank(A)$.

The rank of $A\in\mathbb{R}^{m\times n}$ satisfies $rank(A)\le \min(m,n)$. Consider the case that $n>m$, then $rank(A)\le m$.

If the row vectors of $A$ are linearly independent, then $rank(A) = m$.

Thus, the dimension of the subspace for harmless perturbations is $dim(\mathcal{S}) = n-m$.

\end{proof}

\textbf{Theorem 2} (Set of harmless perturbations for two-layer neural networks) \label{appx:theorem_non_linear} 
Given a two-layer neural network $f(x) = \sigma(Ax)$, where $\sigma$ represents any function. If $\sigma$ is an injective function, the set of harmless perturbations on the input $\mathcal{P}$ for $f$ is $\mathcal{P} = \mathcal{P}^{(0)}$. Otherwise, $\mathcal{P} = \mathcal{P}^{(0)}\cup \mathcal{P}^{(1)}\supseteq \mathcal{\mathcal{P}}^{(0)}$. Here, $\mathcal{P}^{(1)}=\{\delta|A\delta = \delta^{(1)}, \forall \delta^{(1)}\in\mathcal{H}^{(1)}\cap C(A)\}$ is determined by the specific function $\sigma$ and the input sample $x$.
\begin{proof}

The set of input perturbations for the $(l+1)$-th layer is defined as $\mathcal{P}^{(l)}\coloneqq \{\delta\in \mathbb{R}^{n_\text{in}}|  z^{(l)}+\delta^{(l)} = (f^{(l)}\circ\cdots\circ f^{(1)})(x+\delta), \forall \delta^{(l)}\in \mathcal{H}^{(l)} \}$ such that the perturbations on the $l$-th intermediate-layer features  have no effect on the network output, \textit{i.e.}, $\delta^{(l)} \in \mathcal{H}^{(l)}$.

According to~\cref{lemma:space_of_harmless_perturbations}, given a two-layer neural network $f = \sigma(Ax)$, the set of harmless input perturbations for the network $f$ with two layers is $\mathcal{P} = \bigcup_{l=0}^{L-1} \mathcal{P}^{(l)} = \mathcal{P}^{(0)}\cup \mathcal{P}^{(1)}\supseteq \mathcal{\mathcal{P}}^{(0)}$, $\mathcal{P}^{(0)} \coloneqq \mathcal{H}^{(0)}$.

Suppose $\sigma$ is an injective function, then  $\mathcal{H}^{(1)}=\{\mathbf{0}\}$, according to~\cref{lemma:injective_function}.  Thus, $\mathcal{P}^{(1)}=\{\delta|z^{(1)} + \mathbf{0} = A(x+\delta)\}=\{\delta|A\delta = \mathbf{0}\}= N(A)$, where $z^{(1)} = Ax$. Therefore, the set of harmless perturbations on the input $\mathcal{P}$ for $\sigma(Ax)$ is $\mathcal{P} = \mathcal{P}^{(0)}\cup \mathcal{P}^{(1)}= \mathcal{P}^{(0)}$.

Otherwise, if $\sigma$ is not an injective function, then $\mathcal{H}^{(1)} \ne \{\mathbf{0}\}$, according to~\cref{lemma:injective_function}. Therefore, the harmless perturbation space \textit{may} expand, \textit{i.e.}, $\mathcal{P}  = \mathcal{P}^{(0)}\cup \mathcal{P}^{(1)}\supseteq \mathcal{\mathcal{P}}^{(0)}$. Then, $\mathcal{P}^{(1)} = \{\delta|z^{(1)} + \delta^{(1)} = A(x+\delta), \forall \delta^{(1)}\in \mathcal{H}^{(1)}\} = \{\delta|A\delta = \delta^{(1)}, \forall \delta^{(1)}\in \mathcal{H}^{(1)}\}$, where $z^{(1)} = Ax$. Notice that the equation $A\delta = \delta^{(1)} (\delta \ne \mathbf{0})$ has a solution (meaning at least one solution) if and only if $\delta^{(1)}$ is in the column space of $A$, \textit{i.e.}, $\delta^{(1)}\in C(A)$. Then, $\mathcal{P}^{(1)}=\{\delta|A\delta = \delta^{(1)}, \forall \delta^{(1)}\in\mathcal{H}^{(1)}, \delta^{(1)}\in C(A)\} = \{\delta|A\delta = \delta^{(1)}, \forall \delta^{(1)}\in\mathcal{H}^{(1)}\cap C(A)\}$. Here $\mathcal{P}^{(1)}$ is determined by the specific function $\sigma$ and the input sample $x$. Please see~\cref{lemma:relu_layer,lemma:softmax_layer,lemma:average_pooling_layer} for specific functions.

\end{proof}

\textbf{Theorem 3} (Arbitrary decomposition of perturbations) 
\label{appx:theorem2}
If there exists a harmless perturbation subspace in the $(l+1)$-th linear layer, \textit{i.e.}, $\mathcal{H}^{(l)}\ne \{\mathbf{0}\}$,  given any perturbation $\forall \delta^{(l)}\in \mathbb{R}^{n^{(l)}}$ and $\delta^{(l)}\notin \mathcal{H}^{(l)}$, it can be arbitrarily decomposed into the sum of a harmless perturbation and a harmful perturbation, \textit{i.e.}, $\delta^{(l)} = \delta^{(l)}_a + \delta^{(l)}_b, \forall \delta^{(l)}_a \in \mathcal{H}^{(l)}$ and $\delta^{(l)}_b \notin \mathcal{H}^{(l)}$. Then, $ f^{(l+1)}(\delta^{(l)})= f^{(l+1)}(\delta^{(l)}_b)$. 
\begin{proof} 
Let the matrix $A \in \mathbb{R}^{n^{(l+1)}\times n^{(l)}}$ with linearly independent rows/columns have an equivalent effect with the parameters of the linear layer, \textit{i.e.}, $z^{(l+1)}=f^{(l+1)}(z^{(l)})=Az^{(l)}\in \mathbb{R}^{n^{(l+1)}}$. 

Let $\mathcal{H}^{(l)}=\{\delta^{(l)}\in \mathbb{R}^{n^{(l)}}|A(z^{(l)}+\delta^{(l)})=Az^{(l)}\}=\{\delta^{(l)}|A\delta^{(l)}=\mathbf{0}\} = N(A)\ne \{\mathbf{0}\}$.

(1) First, we will prove that given any perturbation $\forall \delta^{(l)}\in \mathbb{R}^{n^{(l)}}$ and $\delta^{(l)}\notin \mathcal{H}^{(l)}$, it can be arbitrarily decomposed into the sum of a harmless perturbation and a harmful perturbation, \textit{i.e.}, $\delta^{(l)} = \delta^{(l)}_a + \delta^{(l)}_b, \forall \delta^{(l)}_a \in \mathcal{H}^{(l)}$ and $\delta^{(l)}_b \notin \mathcal{H}^{(l)}$. In other words, it is equivalent to proving that $\forall \delta^{(l)}_a \in \mathcal{H}^{(l)}, \delta^{(l)}_b \coloneqq \delta^{(l)} -  \delta_a^{(l)} \notin \mathcal{H}^{(l)}$.

To achieve this, we prove $\delta^{(l)}_b\notin \mathcal{H}^{(l)}$ by contradiction. Assume that $\delta^{(l)}_b\in \mathcal{H}^{(l)}$, then we can obtain $\forall \delta^{(l)}_a \in \mathcal{H}^{(l)}, f^{(l+1)}(\delta_a^{(l)})=A\delta^{(l)}_a = \mathbf{0}$ and $\forall \delta^{(l)}_b \coloneqq \delta^{(l)} -  \delta_a^{(l)} \in \mathcal{H}^{(l)}, f^{(l+1)}(\delta_b^{(l)})=A\delta^{(l)}_b = \mathbf{0}$. Then, $\forall \delta^{(l)}\notin \mathcal{H}^{(l)}, f^{(l+1)}(\delta^{(l)})= A\delta^{(l)} = A(\delta^{(l)}_a + \delta^{(l)}_b) = A\delta^{(l)}_a + A\delta^{(l)}_b = \mathbf{0}$, which contradicts $\forall \delta^{(l)}\notin \mathcal{H}^{(l)}, f^{(l+1)}(\delta^{(l)})= A\delta^{(l)} \ne \mathbf{0}$.

Thus, given any perturbation $\delta^{(l)}\notin \mathcal{H}^{(l)}$, it can be arbitrarily decomposed into the sum of a harmless perturbation $\delta^{(l)}_a \in \mathcal{H}^{(l)}$ and a harmful perturbation $\delta^{(l)}_b\coloneqq \delta^{(l)} -  \delta_a^{(l)} \notin \mathcal{H}^{(l)}$.

(2) Second, we will prove that $ f^{(l+1)}(\delta^{(l)})= f^{(l+1)}(\delta^{(l)}_b)$.

We have $\forall \delta^{(l)}_a \in \mathcal{H}^{(l)}, f^{(l+1)}(\delta_a^{(l)})=A\delta^{(l)}_a = \mathbf{0}$, and $\forall \delta^{(l)}_b \coloneqq \delta^{(l)} -  \delta_a^{(l)} \notin \mathcal{H}^{(l)}, f^{(l+1)}(\delta_b^{(l)})=A\delta^{(l)}_b \ne \mathbf{0}$.

\begin{equation}
\begin{aligned}
f^{(l+1)}(\delta^{(l)})&= A\delta^{(l)} \\
&= A(\delta^{(l)}_a + \delta^{(l)}_b) \\
&= A\delta^{(l)}_a + A\delta^{(l)}_b \\
&= A\delta^{(l)}_b  = f^{(l+1)}(\delta_b^{(l)})
\end{aligned}
\end{equation}
Thus, the layer output of any perturbation  $\forall \delta^{(l)}\in \mathbb{R}^{n^{(l)}}$ and $\delta^{(l)}\notin \mathcal{H}^{(l)}$  is equivalent to the layer output of its corresponding harmful component $\delta^{(l)}_b\coloneqq \delta^{(l)} - \delta^{(l)}_a\notin \mathcal{H}^{(l)}, \forall \delta^{(l)}_a \in \mathcal{H}^{(l)}$.
\end{proof}

\textbf{Theorem 4} (Orthogonal decomposition of perturbations) 
\label{appx:theorem3}
If there exists a harmless perturbation subspace in the $(l+1)$-th linear layer, \textit{i.e.}, $\mathcal{H}^{(l)}\ne \{\mathbf{0}\}$,  given any perturbation $\forall \delta^{(l)}\in \mathbb{R}^{n^{(l)}}$ and $\delta^{(l)}\notin \mathcal{H}^{(l)}$, it has a unique decomposition $\delta^{(l)} = \delta^{(l)}_{\parallel} + \delta^{(l)}_{\bot}$ such that the parallel component $\delta^{(l)}_{\parallel} = P\delta^{(l)} \in \mathcal{H}^{(l)} $ and the  orthogonal component $\delta^{(l)}_{\bot} = (I-P)\delta^{(l)} \notin \mathcal{H}^{(l)}$. Then, $f^{(l+1)}(\delta^{(l)}_{\parallel})= \mathbf{0}$ and $f^{(l+1)}(\delta^{(l)}) = f^{(l+1)}(\delta^{(l)}_{\bot})$.

\begin{proof} 
Let the matrix $A \in \mathbb{R}^{n^{(l+1)}\times n^{(l)}}$ with linearly independent rows/columns have an equivalent effect with the parameters of the linear layer, \textit{i.e.}, $z^{(l+1)}=f^{(l+1)}(z^{(l)})=Az^{(l)}\in \mathbb{R}^{n^{(l+1)}}$. 

Let $\mathcal{H}^{(l)}=\{\delta^{(l)}\in \mathbb{R}^{n^{(l)}}|A(z^{(l)}+\delta^{(l)})=Az^{(l)}\}=\{\delta^{(l)}|A\delta^{(l)}=\mathbf{0}\} = N(A)\ne \{\mathbf{0}\}$.

(1) First, we will prove that given any perturbation $\forall \delta^{(l)}\in \mathbb{R}^{n^{(l)}}$ and $\delta^{(l)}\notin \mathcal{H}^{(l)}$, it has a \textit{unique} decomposition $\delta^{(l)} = \delta^{(l)}_{\parallel} + \delta^{(l)}_{\bot}$ such that $\delta^{(l)}_{\parallel} = P\delta^{(l)} \in \mathcal{H}^{(l)} $ and  $\delta^{(l)}_{\bot} = (I-P)\delta^{(l)} \notin \mathcal{H}^{(l)}$. 

Here, $P= U(U^\top U)^{-1}U^\top$ represents the projection matrix onto the subspace $\mathcal{H}^{(l)}$ of $\mathbb{R}^{n^{(l)}}$, and the matrix $U\in\mathbb{R}^{n^{(l)}\times dim(\mathcal{H}^{(l)})}$ denotes a set of $dim(\mathcal{H}^{(l)})$ orthogonal bases for the subspace $\mathcal{H}^{(l)}$. 

\textbf{(a) Orthogonal decomposition.} We will prove that $\forall \delta^{(l)}\in \mathbb{R}^{n^{(l)}}$ and $\delta^{(l)}\notin \mathcal{H}^{(l)}$, it has a decomposition $\delta^{(l)} = \delta^{(l)}_{\parallel} + \delta^{(l)}_{\bot}$.

According to~\cref{theorem2},  given any perturbation $\forall \delta^{(l)}\in \mathbb{R}^{n^{(l)}}$ and $\delta^{(l)}\notin \mathcal{H}^{(l)}$, it can be arbitrarily decomposed into
the sum of a harmless perturbation and a harmful perturbation. Then, let $\delta^{(l)}_{\parallel}\in \mathcal{H}^{(l)}$ and $\delta^{(l)}_{\bot}\notin \mathcal{H}^{(l)}$.

Let $U=[u_1, u_2,\cdots, u_{d}]\in\mathbb{R}^{n^{(l)}\times dim(\mathcal{H}^{(l)})}$, where $d=dim(\mathcal{H}^{(l)})$. Here, $u_1, u_2,\cdots, u_{d}\in \mathbb{R}^{n^{(l)}}$ are a set of orthonormal bases that spans the subspace $\mathcal{H}^{(l)}$. 

Since $\delta^{(l)}_{\parallel}\in \mathcal{H}^{(l)}$, it can be represented as a linear combination of orthonormal bases, \textit{i.e.}, $\delta^{(l)}_{\parallel} = \sum_{i=1}^{dim(\mathcal{H}^{(l)})}c_iu_i$, $c_i\in \mathbb{R}$. Rewrite $\delta^{(l)}_{\parallel}$ into matrix form, \textit{i.e.}, $\delta^{(l)}_{\parallel} = Uc$, $c = [c_1, c_2, \cdots, c_d]^\top$.

Since $\delta^{(l)}_{\bot}\coloneqq \delta^{(l)} - \delta^{(l)}_{\parallel} = \delta^{(l)} - Uc$  is orthogonal to the subspace $\mathcal{H}^{(l)}$, $\delta^{(l)}_{\bot}$ is orthogonal to the orthonormal bases $u_1, u_2,\cdots, u_{d}$, respectively. Then, it can derived that $u_1^\top (\delta^{(l)} - Uc) = 0, \cdots, u_d^\top (\delta^{(l)} - Uc) = 0$. Rewrite these equations into matrix form, \textit{i.e.}, $U^\top (\delta^{(l)} - Uc) = \mathbf{0}$. Thus, $c = (U^\top U)^{-1} U^\top \delta^{(l)}$ ($U^\top U$ is invertible, since $U$ has full column rank).

Thus, $\delta^{(l)}_{\parallel} = Uc = U(U^\top U)^{-1} U^\top \delta^{(l)} = P\delta^{(l)}\in \mathcal{H}^{(l)}$ and $\delta^{(l)}_{\bot} = \delta^{(l)} - \delta^{(l)}_{\parallel} = (I-P)\delta^{(l)}\notin \mathcal{H}^{(l)}$. We have proved that $\forall \delta^{(l)}\in \mathbb{R}^{n^{(l)}}$ and $\delta^{(l)}\notin \mathcal{H}^{(l)}$, it has a decomposition $\delta^{(l)} = \delta^{(l)}_{\parallel} + \delta^{(l)}_{\bot}$.

\textbf{(b) Uniqueness of orthogonal decomposition.} We will prove that the orthogonal decomposition on a subspace is \textit{unique}.

Let $\mathcal{H}^{(l)} \subset \mathbb{R}^{n^{(l)}}$ be a subspace of $\mathbb{R}^{n^{(l)}}$, and let $P_1$, $P_2$ be arbitrary projection matrices  onto $\mathcal{H}^{(l)}$, we prove that the orthogonal projector onto $\mathcal{H}^{(l)}$ is unique, \textit{i.e.}, $P_1 = P_2$.

For $\forall \delta^{(l)} \in \mathbb{R}^{n^{(l)}}$, the parallel components are $\delta^{(l)}_{\parallel, 1} = P_1\delta^{(l)}\in \mathcal{H}^{(l)}$ and $\delta^{(l)}_{\parallel, 2} = P_2\delta^{(l)}\in \mathcal{H}^{(l)}$. The corresponding orthogonal components satisfy $\forall \delta^{(l)} \in \mathbb{R}^{n^{(l)}},  (\delta^{(l)} - P_1\delta^{(l)})\bot \mathcal{H}^{(l)}$ and $(\delta^{(l)} - P_2\delta^{(l)})\bot \mathcal{H}^{(l)}$, thus, $(P_1- P_2)\delta^{(l)} \bot \mathcal{H}^{(l)}$. However, $(P_1- P_2)\delta^{(l)} = \delta^{(l)}_{\parallel, 1} - \delta^{(l)}_{\parallel, 2} \in \mathcal{H}^{(l)}$, then we have $(P_1- P_2)\delta^{(l)} = \mathbf{0}$ for every $\delta^{(l)}$. Therefore, $P_1 = P_2$. 

Thus, given any perturbation $\forall \delta^{(l)}\in \mathbb{R}^{n^{(l)}}$ and $\delta^{(l)}\notin \mathcal{H}^{(l)}$, the orthogonal decomposition of perturbation $\delta^{(l)} = \delta^{(l)}_{\parallel} + \delta^{(l)}_{\bot}$ on a subspace $\mathcal{H}^{(l)}\subset \mathbb{R}^{n^{(l)}}$ is unique.

(2) Second, we will prove that $f^{(l+1)}(\delta^{(l)}_{\parallel})= \mathbf{0}$ and $f^{(l+1)}(\delta^{(l)}) = f^{(l+1)}(\delta^{(l)}_{\bot})$.

We have $ \delta^{(l)}_{\parallel} \in \mathcal{H}^{(l)}, f^{(l+1)}(\delta_{\parallel}^{(l)})=A\delta^{(l)}_{\parallel} = \mathbf{0}$, and $ \delta^{(l)}_{\bot} \coloneqq \delta^{(l)} -  \delta_{\parallel}^{(l)} \notin \mathcal{H}^{(l)}, f^{(l+1)}(\delta_{\bot}^{(l)})=A\delta^{(l)}_{\bot} \ne \mathbf{0}$.

\begin{equation}
\begin{aligned}
f^{(l+1)}(\delta^{(l)})&= A\delta^{(l)} \\
&= A(\delta^{(l)}_{\parallel} + \delta^{(l)}_{\bot}) \\
&= A\delta^{(l)}_{\parallel} + A\delta^{(l)}_{\bot} \\
&= A\delta^{(l)}_{\bot}  = f^{(l+1)}(\delta_{\bot}^{(l)})
\end{aligned}
\end{equation}

Thus, the layer output of any perturbation  $\forall \delta^{(l)}\in \mathbb{R}^{n^{(l)}}$ and $\delta^{(l)}\notin \mathcal{H}^{(l)}$  is equivalent to the layer output of its unique orthogonal component $\delta^{(l)}_{\bot}\coloneqq \delta^{(l)} - \delta^{(l)}_{\parallel} = (I-P)\delta^{(l)} \notin \mathcal{H}^{(l)}$.

\end{proof}

\textbf{Theorem 5} (Identical impact of a family of perturbations) 
\label{appx:theorem4}
If there exists a harmless perturbation subspace in the $(l+1)$-th linear layer, \textit{i.e.}, $\mathcal{H}^{(l)}\ne \{\mathbf{0}\}$, given two different perturbations $\forall \delta^{(l)}\ne \hat{\delta}^{(l)} \in \mathbb{R}^{n^{(l)}}$ and $\delta^{(l)}, \hat{\delta}^{(l)} \notin \mathcal{H}^{(l)}$, if their orthogonal components are the same, \textit{i.e.}, $\delta_{\bot}^{(l)} = \hat{\delta}_{\bot}^{(l)}$, then $f^{(l+1)}(\delta^{(l)}) =  f^{(l+1)}(\hat{\delta}^{(l)})$.

\begin{proof} 
According to~\cref{theorem3}, $\forall \delta^{(l)}\in \mathbb{R}^{n^{(l)}}$ and $\delta^{(l)}\notin \mathcal{H}^{(l)}, f^{(l+1)}(\delta^{(l)}) = f^{(l+1)}(\delta_{\bot}^{(l)})$.



Similarly, $\forall \hat{\delta}^{(l)} \ne \delta^{(l)} \in \mathbb{R}^{n^{(l)}}$ and $\hat{\delta}^{(l)}\notin \mathcal{H}^{(l)}, f^{(l+1)}(\hat{\delta}^{(l)}) = f^{(l+1)}(\hat{\delta}_{\bot}^{(l)})$.

Thus, given $\delta_{\bot}^{(l)} = \hat{\delta}_{\bot}^{(l)}$, then $f^{(l+1)}(\delta^{(l)}) = f^{(l+1)}(\delta_{\bot}^{(l)}) = f^{(l+1)}(\hat{\delta}_{\bot}^{(l)}) = f^{(l+1)}(\hat{\delta}^{(l)})$.

\end{proof}

\section{Proofs of Corollaries}
In this section, we prove the corollaries in the paper.

\textbf{Corollary 1} (Dimension of harmless perturbation subspace for a convolutional layer) 
\label{appx:corollary1}
Given a convolutional layer $f^{(l+1)}$ with linearly independent vectorized kernels whose kernel size is larger than or equal to the stride, the input feature is $z^{(l)}\in \mathbb{R}^{C_{\text{\rm in}} \times H_{\text{\rm in}} \times W_{\text{\rm in}}}$ and the output feature is $z^{(l+1)}=f^{(l+1)}(z^{(l)})\in\mathbb{R}^{C_{\text{\rm out}} \times H_{\text{\rm out}} \times W_{\text{\rm out}}}$. If the input dimension of the convolutional layer is greater than the output dimension, then the dimension of the subspace for harmless perturbations is $dim(\mathcal{H}^{(l)}) = C_{\text{\rm in}}  H_{\text{\rm in}}  W_{\text{\rm in}} - C_{\text{\rm out}}  H_{\text{\rm out}} W_{\text{\rm out}}$. Otherwise, $\mathcal{H}^{(l)} = \{\mathbf{0}\}$.

\begin{proof} 

Let the matrix $A\in \mathbb{R}^{(C_{\text{\rm out}}  H_{\text{\rm out}} W_{\text{\rm out}})\times  (C_{\text{\rm in}}  H_{\text{\rm in}}  W_{\text{\rm in}})}$ with linearly independent rows/columns (see~\cref{appx:equivalent_matrix} for the generation of matrix $A$) have an equivalent effect with the parameters of the convolutional layer, \textit{i.e.}, $\hat{z}^{(l+1)}=A\hat{z}^{(l)}$, $\hat{z}^{(l)}\in \mathbb{R}^{C_{\text{\rm in}}  H_{\text{\rm in}}  W_{\text{\rm in}}}$ is the vectorized feature $z^{(l)}$, and $\hat{z}^{(l+1)}\in \mathbb{R}^{C_{\text{\rm out}}  H_{\text{\rm out}}  W_{\text{\rm out}}}$ is the vectorized feature $z^{(l+1)}$. 

Let $\mathcal{H}^{(l)}=\{\delta^{(l)}\in\mathbb{R}^{C_{\text{\rm in}}  H_{\text{\rm in}}  W_{\text{\rm in}}}|A(\hat{z}^{(l)}+\delta^{(l)})=A\hat{z}^{(l)}\}=\{\delta^{(l)}|A\delta^{(l)}=\mathbf{0}\} = N(A)$.

According to~\cref{remark2}, if $C_{\text{\rm in}}  H_{\text{\rm in}}  W_{\text{\rm in}} \le C_{\text{\rm out}}  H_{\text{\rm out}} W_{\text{\rm out}}$ and the column vectors of $A$ are linearly independent, then the dimension of the subspace for harmless perturbations is $dim(\mathcal{H}^{(l)}) = C_{\text{\rm in}}  H_{\text{\rm in}}  W_{\text{\rm in}}-C_{\text{\rm in}}  H_{\text{\rm in}}  W_{\text{\rm in}}=0$. Thus, $\mathcal{H}^{(l)} = \{\mathbf{0}\}$.

According to Remark 2 in~\cref{appx:remark1}, if $C_{\text{\rm in}}  H_{\text{\rm in}}  W_{\text{\rm in}} > C_{\text{\rm out}}  H_{\text{\rm out}} W_{\text{\rm out}}$ and the row vectors of $A$ are linearly independent, then the dimension of the subspace for harmless perturbations is $dim(\mathcal{H}^{(l)}) = C_{\text{\rm in}}  H_{\text{\rm in}}  W_{\text{\rm in}}-C_{\text{\rm out}}  H_{\text{\rm out}} W_{\text{\rm out}}$.

Thus, if the input dimension of the convolutional layer is greater than the output dimension, \textit{i.e.}, $C_{\text{\rm in}}  H_{\text{\rm in}}  W_{\text{\rm in}} > C_{\text{\rm out}}  H_{\text{\rm out}} W_{\text{\rm out}}$, then the dimension of the subspace for harmless perturbations is $dim(\mathcal{H}^{(l)}) = C_{\text{\rm in}}  H_{\text{\rm in}}  W_{\text{\rm in}} - C_{\text{\rm out}}  H_{\text{\rm out}} W_{\text{\rm out}}$. Otherwise, $\mathcal{H}^{(l)} = \{\mathbf{0}\}$.

\end{proof}

\begin{corollary}[Dimension of harmless perturbation subspace for a fully-connected layer]
\label{appx:corollary2}
Given a fully-connected layer $z^{(l+1)}=f^{(l+1)}(z^{(l)}) = W^\top z^{(l)}\in \mathbb{R}^{N_{\text{\rm out}}}$ with linearly independent rows/columns in the matrix $W$, the input feature is $z^{(l)}\in \mathbb{R}^{N_{\text{\rm in}}}$. If the input dimension of the fully-connected layer is greater than the output dimension, then the dimension of the subspace for harmless perturbations is $dim(\mathcal{H}^{(l)}) = N_{\text{\rm in}} - N_{\text{\rm out}}$. Otherwise, $\mathcal{H}^{(l)} = \{\mathbf{0}\}$.  
\end{corollary}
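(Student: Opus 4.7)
The plan is to treat the fully-connected layer as a direct instance of the linear-layer setting of \cref{theorem1}, with parameter matrix $A := W^{\top}$. Since $z^{(l+1)} = W^{\top} z^{(l)}$ with $z^{(l)} \in \mathbb{R}^{N_{\text{in}}}$ and $z^{(l+1)} \in \mathbb{R}^{N_{\text{out}}}$, we have $A \in \mathbb{R}^{N_{\text{out}} \times N_{\text{in}}}$, which fits the $m \times n$ template of \cref{theorem1} with $m = N_{\text{out}}$ and $n = N_{\text{in}}$. By \cref{def:set_of_harmless_perturbations_features}, the set of harmless feature perturbations is $\mathcal{H}^{(l)} = \{\delta^{(l)} \in \mathbb{R}^{N_{\text{in}}} \mid W^{\top}(z^{(l)} + \delta^{(l)}) = W^{\top} z^{(l)}\} = N(W^{\top})$, so \cref{theorem1} immediately yields $\dim(\mathcal{H}^{(l)}) = N_{\text{in}} - \operatorname{rank}(W^{\top})$.

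Next I would split into the two cases. For the case $N_{\text{in}} > N_{\text{out}}$, I invoke \cref{remark1}: it suffices that the rows of $A = W^{\top}$ are linearly independent, which is exactly the hypothesis that the columns of $W$ are linearly independent (one of the two alternatives in the corollary's hypothesis ``linearly independent rows/columns in $W$''). Under that assumption $\operatorname{rank}(W^{\top}) = N_{\text{out}}$, so $\dim(\mathcal{H}^{(l)}) = N_{\text{in}} - N_{\text{out}}$, as claimed.

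For the case $N_{\text{in}} \le N_{\text{out}}$, I invoke \cref{remark2}: it suffices that the columns of $A = W^{\top}$ are linearly independent, which is exactly the hypothesis that the rows of $W$ are linearly independent (the other alternative in the corollary's hypothesis). Then $\operatorname{rank}(W^{\top}) = N_{\text{in}}$, whence $\dim(\mathcal{H}^{(l)}) = N_{\text{in}} - N_{\text{in}} = 0$, i.e., $\mathcal{H}^{(l)} = \{\mathbf{0}\}$.

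There is no real obstacle here beyond bookkeeping: the only subtlety is the transpose, and carefully matching ``rows of $W^{\top}$'' with ``columns of $W$'' (and vice versa) so that the appropriate remark applies in each regime. Once that correspondence is made explicit, the corollary is a direct specialization of \cref{theorem1} together with \cref{remark1} and \cref{remark2}.
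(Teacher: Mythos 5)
Your proposal is correct and follows essentially the same route as the paper's own proof: set $A = W^{\top}$, reduce to \cref{theorem1} via the nullspace characterization of $\mathcal{H}^{(l)}$, and dispatch the two cases with \cref{remark1} and \cref{remark2}. Your explicit bookkeeping of which rows/columns of $W$ versus $W^{\top}$ must be independent in each regime is a slight refinement of the paper's statement but not a different argument.
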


\begin{proof} 
Let the matrix $A=W^\top \in \mathbb{R}^{N_{\text{\rm out}}\times N_{\text{\rm in}}}$ with linearly independent rows/columns have an equivalent effect with the parameters of the fully-connected layer. Let $\mathcal{H}^{(l)}=\{\delta^{(l)}\in\mathbb{R}^{ N_{\text{\rm in}}}|A(z^{(l)}+\delta^{(l)})=Az^{(l)}\}=\{\delta^{(l)}|A\delta^{(l)}=\mathbf{0}\} = N(A)$.

According to~\cref{remark2}, if $N_{\text{\rm in}} \le N_{\text{\rm out}}$ and the column vectors of $A$ are linearly independent, then the dimension of the subspace for harmless perturbations is $dim(\mathcal{H}^{(l)}) = N_{\text{\rm in}}-N_{\text{\rm in}}=0$. Thus, $\mathcal{H}^{(l)} = \{\mathbf{0}\}$.

According to Remark 2 in~\cref{appx:remark1}, if $N_{\text{\rm in}} > N_{\text{\rm out}}$ and the row vectors of $A$ are linearly independent, then the dimension of the subspace for harmless perturbations is $dim(\mathcal{H}^{(l)}) = N_{\text{\rm in}}-N_{\text{\rm out}}$.

Thus, if the input dimension of the fully-connected layer is greater than the output dimension, \textit{i.e.}, $N_{\text{\rm in}} > N_{\text{\rm out}}$, then the dimension of the subspace for harmless perturbations is $dim(\mathcal{H}^{(l)}) = N_{\text{\rm in}} - N_{\text{\rm out}}$. Otherwise, $\mathcal{H}^{(l)} = \{\mathbf{0}\}$. 

\end{proof}

\section{Proofs of Lemmas}
In this section, we prove the lemmas in the paper.

\textbf{Lemma 1} (Set of harmless input perturbations for a DNN) 
\label{appx:lemma1}
The set of harmless input perturbations for  a DNN $f$ with $L$ layers is derived as $\mathcal{P} = \bigcup_{l=0}^{L-1} \mathcal{P}^{(l)}$
    , $\mathcal{P}^{(0)} \coloneqq \mathcal{H}^{(0)}$, $\mathcal{P}\subset \mathbb{R}^{n_\text{in}}$.

\begin{proof} 
The set of harmelss perturbations for a DNN $f$ is defined as $\mathcal{P}\coloneqq \{\delta\in \mathbb{R}^{n_\text{in}} | f(x+\delta) = f(x)\}$.

The set of harmless perturbations on the $(l+1)$-th layer of a DNN $f$ is defined as $\mathcal{H}^{(l)} \coloneqq \{\delta^{(l)} \in \mathbb{R}^{n^{(l)}} | f^{(l+1)}(z^{(l)}+\delta^{(l)}) = f^{(l+1)}(z^{(l)})\}$. Here, $z^{(l)}\in \mathbb{R}^{n^{(l)}}$ represents the features of the $l$-th intermediate layer of the input sample $x$, and $\delta^{(l)}$ denotes the perturbations added to the features $z^{(l)}$.

The set of input perturbations for the $(l+1)$-th layer is defined as $\mathcal{P}^{(l)}\coloneqq \{\delta\in \mathbb{R}^{n_\text{in}}|  z^{(l)}+\delta^{(l)} = (f^{(l)}\circ\cdots\circ f^{(1)})(x+\delta), \forall \delta^{(l)}\in \mathcal{H}^{(l)} \}$ such that the perturbations on the $l$-th intermediate-layer features  have no effect on the network output, \textit{i.e.}, $\delta^{(l)} \in \mathcal{H}^{(l)}$.

To prove that $\mathcal{P} = \bigcup_{l=0}^{L-1} \mathcal{P}^{(l)}$, We will prove that (1) $\forall \delta \in \bigcup_{l=0}^{L-1} \mathcal{P}^{(l)}, \delta \in \mathcal{P}$, and (2) $\forall \delta \in \mathcal{P}, \delta \in \bigcup_{l=0}^{L-1} \mathcal{P}^{(l)}$, respectively. Notice that the $(l+1)$-th layer can be arbitrary layer (linear or non-linear layer), $\mathcal{P}$, $\mathcal{H}^{(l)}$ and $\mathcal{P}^{(l)}$ are instance-specific, given a specific  input sample $x$ and the corresponding feature $z^{(l)}$ ($x \coloneqq z^{(0)}$).

(1) We will prove that $\forall \delta \in \bigcup_{l=0}^{L-1} \mathcal{P}^{(l)}, \delta \in \mathcal{P}$.

Given an input sample $x$, $\forall \delta \in \bigcup_{l=0}^{L-1} \mathcal{P}^{(l)}$ means that $\forall \delta, \exists l\in \{0,1,\cdots, L-1\}$ such that $ \delta \in \mathcal{P}^{(l)}\coloneqq \{\delta\in \mathbb{R}^{n_\text{in}}|  z^{(l)}+\delta^{(l)} = (f^{(l)}\circ\cdots\circ f^{(1)})(x+\delta), \forall \delta^{(l)}\in \mathcal{H}^{(l)} \}$, $\mathcal{P}^{(0)} \coloneqq \mathcal{H}^{(0)}$. Then, the corresponding harmless perturbations  $\delta^{(l)}$ on the $l$-th intermediate-layer features satisfy $ f^{(l+1)}(z^{(l)}+\delta^{(l)}) = f^{(l+1)}(z^{(l)})$ due to $\delta^{(l)}\in \mathcal{H}^{(l)}$. Therefore, the layer outputs of the subsequent layers are all equal, $\forall l' \in \{l, l+1, \cdots, L-1\}, f^{(l'+1)}(z^{(l')}+\delta^{(l')}) = f^{(l'+1)}(z^{(l')})$ and finally $f^{(L)}(z^{(L-1)}+\delta^{(L-1)}) = f^{(L)}(z^{(L-1)})$. Thus, $f(x+\delta) = f(x)$ can be derived and hence $\delta \in \mathcal{P}\coloneqq \{\delta\in \mathbb{R}^{n_\text{in}} | f(x+\delta) = f(x)\}$. 

Thus, we have proved that $\forall \delta \in \bigcup_{l=0}^{L-1} \mathcal{P}^{(l)}, \delta \in \mathcal{P}$.

(2) We will prove that $\forall \delta \in \mathcal{P}, \delta \in \bigcup_{l=0}^{L-1} \mathcal{P}^{(l)}$.

We will prove $\forall \delta \in \mathcal{P}, \delta \in \bigcup_{l=0}^{L-1} \mathcal{P}^{(l)}$ by contradiction. Assume that $\exists \delta \in \mathcal{P}, \delta \notin \bigcup_{l=0}^{L-1} \mathcal{P}^{(l)}$, $\mathcal{P}^{(0)} \coloneqq \mathcal{H}^{(0)}$. Here, $\delta \notin \bigcup_{l=0}^{L-1} \mathcal{P}^{(l)}$ means that $\forall l\in \{0,1,\cdots, L-1\}, \delta \notin \mathcal{P}^{(l)}$. Then, the corresponding harmless perturbations  $\delta^{(l)}$ on the $l$-th intermediate-layer features satisfy $\forall l\in \{0,1,\cdots, L-1\}, \delta^{(l)} \notin \mathcal{H}^{(l)}\coloneqq \{\delta^{(l)} \in \mathbb{R}^{n^{(l)}} | f^{(l+1)}(z^{(l)}+\delta^{(l)}) = f^{(l+1)}(z^{(l)})\}$. Therefore, the outputs of each layer are not equal, \textit{i.e.}, $\forall l\in \{0,1,\cdots, L-1\}, f^{(l+1)}(z^{(l)}+\delta^{(l)}) \ne f^{(l+1)}(z^{(l)})$ and finally $f^{(L)}(z^{(L-1)}+\delta^{(L-1)}) \ne f^{(L)}(z^{(L-1)})$.Thus, $f(x+\delta) \ne f(x)$, which is in contradiction to $\delta \in \mathcal{P}\coloneqq \{\delta\in \mathbb{R}^{n_\text{in}} | f(x+\delta) = f(x)\}$. 

Thus, we have proved that $\forall \delta \in \mathcal{P}, \delta \in \bigcup_{l=0}^{L-1} \mathcal{P}^{(l)}$.

Since (1) $\forall \delta \in \bigcup_{l=0}^{L-1} \mathcal{P}^{(l)}, \delta \in \mathcal{P}$ and (2) $\forall \delta \in \mathcal{P}, \delta \in \bigcup_{l=0}^{L-1} \mathcal{P}^{(l)}$, we have proved that $\mathcal{P} = \bigcup_{l=0}^{L-1} \mathcal{P}^{(l)}$, $\mathcal{P}\subset \mathbb{R}^{n_\text{in}}$.

\end{proof}

\textbf{Lemma 1.1} (Set of harmless perturbations for injective functions)
\label{appx:injective_function}
If the $\!(l\!+\!1\!)$-th layer $f^{(l+1)}$ is an injective function, the set of harmless perturbations on the $\!(l\!+\!1\!)$-th layer of a DNN $f$ is $\mathcal{H}^{(l)} = \{\mathbf{0}\}$. Otherwise, $\mathcal{H}^{(l)} \ne \{\mathbf{0}\}$.
\begin{proof} 
According to~\cref{def:set_of_harmless_perturbations_features}, the set of harmless perturbations on the $(l+1)$-th layer of a DNN $f$ is defined as $\mathcal{H}^{(l)} \coloneqq \{\delta^{(l)} \in \mathbb{R}^{n^{(l)}} | f^{(l+1)}(z^{(l)}+\delta^{(l)}) = f^{(l+1)}(z^{(l)})\}$.

If $f^{(l+1)}$ is an injective function that maps distinct elements of its domain to distinct elements, that is, $x_1\ne x_2$ implies $f^{(l+1)}(x_1)\ne f^{(l+1)}(x_2)$. Since $\forall \delta^{(l)}\ne \mathbf{0}, z^{(l)}+\delta^{(l)}\ne z^{(l)}$, the function output satisfies $f^{(l+1)}(z^{(l)}+\delta^{(l)}) \ne f^{(l+1)}(z^{(l)})$, then, $\mathcal{H}^{(l)}=\{\mathbf{0}\}$.  

Otherwise, if $f^{(l+1)}$ is not an injective function, that is,  $\exists x_1\ne x_2$ satisfies $f^{(l+1)}(x_1) =  f^{(l+1)}(x_2)$. Since $\exists \delta^{(l)}\ne \mathbf{0}, z^{(l)}+\delta^{(l)}\ne z^{(l)}$, the function output satisfies $f^{(l+1)}(z^{(l)}+\delta^{(l)}) = f^{(l+1)}(z^{(l)})$, then, $\mathcal{H}^{(l)} \ne \{\mathbf{0}\}$.

\end{proof}

\textbf{Lemma 1.2} (Set of harmless perturbations for ReLU layers) \label{appx:relu_layer}
Suppose $f^{(l+1)}$ is the \text{\rm ReLU} layer, $\mathcal{H}^{(l)} = \{ \delta^{(l)}|\forall i, \delta^{(l)}_i =\begin{cases} 0,& z^{(l)}_i>0\\ t(t\le-z^{(l)}_i),&  z^{(l)}_i\le0 \end{cases} $$\}$, which is determined by intermediate-layer features $z^{(l)}$ and hence the input sample $x$.

\begin{proof} 

Suppose $f^{(l+1)}$ is the ReLU layer, which is not an injective function. The set of harmless perturbations for the ReLU layer $f^{(l+1)}$ is $\mathcal{H}^{(l)}= \{\delta^{(l)}|\text{\rm ReLU}(z^{(l)}+\delta^{(l)})=\text{\rm ReLU}(z^{(l)})\}$. Since the ReLU layer outputs 0 for all inputs that are not positive, then the $i$-th element (dimension) of $\delta^{(l)}$ in $\mathcal{H}^{(l)}$ satisfies, 

\begin{equation}
\forall i, \delta^{(l)}_i =\begin{cases} 0,& z^{(l)}_i>0\\ t(t\le-z^{(l)}_i),&  z^{(l)}_i\le0 \end{cases} 
\end{equation}

Here, $t$ denotes any real number not greater than $-z^{(l)}_i$. In other words, each element of $\delta^{(l)}$ in $\mathcal{H}^{(l)}$ is determined by the value of the corresponding dimension of the intermediate-layer feature $z^{(l)}$. Some dimensions of  $\delta^{(l)}$ are zero, while others are not greater than the corresponding dimensions of the intermediate-layer feature's (negative) value.

Therefore, suppose $f^{(l+1)}$ is the ReLU layer, $\mathcal{H}^{(l)} = \{ \delta^{(l)}|\forall i, \delta^{(l)}_i =\begin{cases} 0,& z^{(l)}_i>0\\ t(t\le-z^{(l)}_i),&  z^{(l)}_i\le0 \end{cases} $$\}$, which is determined by intermediate-layer features $z^{(l)}$ and hence the input sample $x$.

Besides, let us further consider the impact of the input sample $x$ on the harmless perturbation space of the ReLU layer. Specifically, let us consider the case of the harmless perturbation space for a two-layer neural network with the ReLU layer, \textit{i.e.}, $f(x)=\text{\rm ReLU}(Ax)$. 

According to~\cref{lemma:relu_layer} and~\cref{theorem_non_linear}, $\mathcal{P} = \mathcal{P}^{(0)}\cup \mathcal{P}^{(1)}\supseteq \mathcal{\mathcal{P}}^{(0)}$ and $\mathcal{P}^{(1)}=\{\delta|A\delta = \delta^{(1)}, \forall \delta^{(1)}\in\mathcal{H}^{(1)}\cap C(A)\}$. Note that $\mathcal{P}^{(1)}$ is determined by the input sample $x$. In an extreme case, if every element of $z^{(1)}=Ax$ is positive, $\mathcal{H}^{(1)} = \{\mathbf{0}\}$ and $\mathcal{P}^{(1)}=\{\delta|A\delta = \mathbf{0}\}=N(A)$. Thus, the set of harmless perturbations on the input $\mathcal{P}$ for $\text{\rm ReLU}(Ax)$ is $\mathcal{P} = \mathcal{P}^{(0)}\cup \mathcal{P}^{(1)}= \mathcal{P}^{(0)}$. Otherwise, if every element of $z^{(1)}=Ax$ is not positive, then $\mathcal{H}^{(1)} =\{\delta^{(1)}|\forall i, \delta^{(1)}_i\le-z^{(1)}_i\}$ and $\mathcal{P}^{(1)}=\{\delta|A\delta = \delta^{(1)}, \forall \delta^{(1)}\in\mathcal{H}^{(1)}\cap C(A)\}$. Thus, the set of harmless perturbations on the input $\mathcal{P}$ for $\text{\rm ReLU}(Ax)$ is $\mathcal{P} = \mathcal{P}^{(0)}\cup \mathcal{P}^{(1)}\supseteq \mathcal{P}^{(0)}$.

\end{proof}

\textbf{Lemma 1.3} (Set of harmless perturbations for Softmax layers) \label{appx:softmax_layer}
Suppose $f^{(l+1)}$ is the \text{\rm Softmax} layer,  $\mathcal{H}^{(l)}=\{c\cdot\mathbf{1}, c\in\mathbb{R}\}$.

\begin{proof}

Suppose $f^{(l+1)}$ is the Softmax layer, which is not an injective function. The set of harmless perturbations for the Softmax layer $f^{(l+1)}$ is $\mathcal{H}^{(l)}= \{\delta^{(l)}|\text{\rm Softmax}(z^{(l)}+\delta^{(l)})=\text{\rm Softmax}(z^{(l)})\}$. Since the Softmax layer is invariant when translating the same value in each coordinate, that is, adding $\mathbf{c}=(c,c,\cdots, c)$ to the input yields $\text{\rm Softmax}(\mathbf{x} + \mathbf{c}) = \text{\rm Softmax}(\mathbf{x})$, because the $i$-th element (dimension) satisfies, 

\begin{equation}\forall i, \text{\rm Softmax}(\mathbf{x} + \mathbf{c})_i = \frac{e^{x_i + c}}{\sum_{k=1}^{K}e^{x_k + c}} = \frac{e^{x_i}\cdot e^{c}}{\sum_{k=1}^{K}e^{x_k}\cdot e^{c}} = \text{\rm Softmax}(\mathbf{x})_i \end{equation}

Therefore, suppose $f^{(l+1)}$ is the \text{\rm Softmax} layer,  $\mathcal{H}^{(l)}=\{c\cdot\mathbf{1}, c\in\mathbb{R}\}$.

\end{proof}

\textbf{Lemma 1.4} (Set of harmless perturbations for Average Pooling layers) \label{appx:average_pooling_layer}
Suppose $f^{(l+1)}$ is the \text{\rm Average Pooling} layer, $\mathcal{H}^{(l)}=N(A_{\text{\rm avg}})$. $A_{\text{\rm avg}}$ is a coefficient matrix determined by the constraints that must be satisfied by the perturbations within each averaging region.

\begin{proof}

Suppose $f^{(l+1)}$ is the Average Pooling layer, which is not an injective function. The set of harmless perturbations for the Average Pooling layer $f^{(l+1)}$ is $\mathcal{H}^{(l)}= \{\delta^{(l)}|\text{\rm AvgPool}(z^{(l)}+\delta^{(l)})=\text{\rm AvgPool}(z^{(l)})\}$.

Notice that the Average Pooling layer computes the average over each $k \times k$ region of features. Thus, within this $k \times k$ region, each element (dimension) of $\delta^{(l)}$ in $\mathcal{H}^{(l)}$ must satisfy the equation:

\begin{equation}z^{(l)}_1 + z^{(l)}_2 + … + z^{(l)}_{k\times k} = (z^{(l)}_1 + \delta^{(l)}_1) + (z^{(l)}_2 + \delta^{(l)}_2) + … + (z^{(l)}_{k\times k} + \delta^{(l)}_{k\times k})\end{equation}

Consequently, the perturbations within each  $k \times k$  region must satisfy: $\delta^{(l)}_1 + \delta^{(l)}_2 +\cdots + \delta^{(l)}_{k\times k} = 0$.

Therefore, harmless perturbations within all $K\times K$ region  (\textit{i.e.}, shape of feature is $K\times K$)  can be represented as $A_{\text{avg}}\delta^{(l)} = \mathbf{0}$, where each row of $A_{\text{avg}}$ represents the constraint that perturbations within each $k \times k$ region must satisfy. Thus, $\mathcal{H}^{(l)}=\{\delta^{(l)}|A_{\text{avg}}\delta^{(l)} = \mathbf{0}\} = N(A_{\text{avg}})$.

\end{proof}

\textbf{Lemma 1.5} (Set of harmless perturbations for Max Pooling layers) \label{appx:max_pooling_layer}
Suppose $f^{(l+1)}$ is the \text{\rm Max Pooling} layer, $\mathcal{H}^{(l)}=\{\forall p,i, \delta^{(l)}_{p,i} \le c_p-z^{(l)}_{p,i}\} \cap \{\forall p, \prod_{j=1}^{k\times k} (\delta^{(l)}_{p,j}-c_p+z^{(l)}_{p,j}) = 0\}$. $c_p \coloneqq  \textit{\rm max} \{z^{(l)}_{p,1}, z^{(l)}_{p,2}, \cdots, z^{(l)}_{p,k\times k} \}$ is the maximum value of features within the $k\times k$ region of the $p$-th patch. $\mathcal{H}^{(l)}$ is determined by intermediate-layer features $z^{(l)}$ and hence the input sample $x$.
\begin{proof}
Suppose $f^{(l+1)}$ is the Max Pooling layer, which is not an injective function. The set of harmless perturbations for the Max Pooling layer $f^{(l+1)}$ is $\mathcal{H}^{(l)}= \{\delta^{(l)}|\text{\rm MaxPool}(z^{(l)}+\delta^{(l)})=\text{\rm MaxPool}(z^{(l)})\}$. 

Notice that the Max Pooling layer computes the maximum value of the feature $z^{(l)}$ within each $k \times k$ region. Let us divide the feature $z^{(l)}$ into $P$ patches. For the $p$-th patch of the feature $z^{(l)} (p=\{1, 2, \cdots, P\})$, we define the maximum value within the $k \times k$ region as $c_p \coloneqq  \textit{\rm max} \{z^{(l)}_{p,1}, z^{(l)}_{p,2}, \cdots, z^{(l)}_{p,k\times k} \}$, which depends on the specific feature $z^{(l)}$ and the input sample $x$. Thus, for the $p$-th patch with $k \times k$ region, each element (dimension) of $\delta^{(l)}$ in $\mathcal{H}^{(l)}$ must satisfy the equation:

\begin{equation}
   \begin{cases}
z^{(l)}_{p,1}+\delta^{(l)}_{p,1} \le c_p \\
z^{(l)}_{p,2}+\delta^{(l)}_{p,2} \le c_p \\
\cdots \\
z^{(l)}_{p,k\times k}+\delta^{(l)}_{p,k \times k} \le c_p
\end{cases}
\end{equation}

Note that for the above $k \times k$ inequalities, it needs to be satisfied that at least one of the inequalities takes equality, such that the maximum value of perturbations added to the features within this patch is still $c_p$, \textit{i.e.}, $\textit{\rm max} \{z^{(l)}_{p,1}+\delta^{(l)}_{p,1}, z^{(l)}_{p,2}+\delta^{(l)}_{p,2}, \cdots, z^{(l)}_{p,k\times k}+\delta^{(l)}_{p,k \times k} \} = c_p$. Hence, an additional equality constraint $\prod_{j=1}^{k\times k} [\delta^{(l)}_{p,j}-(c_p-z^{(l)}_{p,j})] = 0$ needs to be satisfied.

Therefore, for the $p$-th patch with $k \times k$ region, the harmless perturbations satisfy $\{\forall i, \delta^{(l)}_{p,i} \le c_p-z^{(l)}_{p,i}\} \cap \{\prod_{j=1}^{k\times k} (\delta^{(l)}_{p,j}-c_p+z^{(l)}_{p,j}) = 0\}$. For harmless perturbations within all $P$ patches, $\mathcal{H}^{(l)}=\{\forall p,i, \delta^{(l)}_{p,i} \le c_p-z^{(l)}_{p,i}\} \cap \{\forall p, \prod_{j=1}^{k\times k} (\delta^{(l)}_{p,j}-c_p+z^{(l)}_{p,j}) = 0\}$.

\end{proof}

\textbf{Lemma 1.6} (Set of harmless perturbations for two-layer linear networks)\label{appx:two_layer_neural_networks}
Given a two-layer linear network $f(x) = A_2A_1x$,  $\mathcal{P} = \mathcal{P}^{(0)}\cup \mathcal{P}^{(1)}\supseteq \mathcal{P}^{(0)} $. Here, $\mathcal{P}^{(0)}=N(A_1)$ and $\mathcal{P}^{(1)}=\{\delta|A_1\delta = \delta^{(1)},\forall \delta^{(1)}\in N(A_2)\cap C(A_1)\}$.

\begin{proof}

In general, $\mathcal{H}^{(1)} = \{\delta^{(1)}|A_2(z^{(1)}+\delta^{(1)})=A_2z^{(1)}\}=N(A_2)$

$\mathcal{P}^{(1)}=\{\delta|z^{(1)}+\delta^{(1)} = A_1(x+\delta),\forall \delta^{(1)}\in\mathcal{H}^{(1)}\}=\{\delta|A_1\delta = \delta^{(1)},\forall \delta^{(1)}\in N(A_2)\}=\{\delta|A_1\delta = \delta^{(1)},\forall \delta^{(1)}\in N(A_2)\cap C(A_1)\}$, where $z^{(1)}=A_1x$. Note that the equation $A_1\delta = \delta^{(1)}$ has a solution (meaning at least one solution) if and only if $\delta^{(1)}$ is in the column space of $A_1$, \textit{i.e.}, $\delta^{(1)}\in C(A_1)$.

$\mathcal{H}^{(0)}= \{\delta|A_1(x+\delta)=A_1x\}=N(A_1)$

$\mathcal{P}^{(0)}\coloneqq \mathcal{H}^{(0)}$

Therefore, $\mathcal{P} = \mathcal{P}^{(0)}\cup\mathcal{P}^{(1)}\supseteq \mathcal{P}^{(0)}$.

Specifically, let us consider two common cases in DNNs. Given $A_1\in\mathbb{R}^{d \times n}$ and $A_2\in\mathbb{R}^{m \times d}$, where $y = A_2 A_1x \in \mathbb{R}^{m}$ and $x\in \mathbb{R}^{n} $.

(1) $n>d$ and $d < m$, which means the input dimension is greater than the first layer's feature dimension, and the first layer's feature dimension is smaller than the second layer's feature dimension.

According to~\cref{theorem1}, $\mathcal{H}^{(1)}=\{\mathbf{0}\}$ and $\mathcal{H}^{(0)}=N(A_1)$. Then, $\mathcal{P}^{(1)}=\mathcal{H}^{(0)}$ and $\mathcal{P} = \mathcal{P}^{(0)}\cup\mathcal{P}^{(1)}=\mathcal{P}^{(0)}$.

(2) $n < d$ and $d > m$, which means the input dimension is smaller than the first layer's feature dimension, and the first layer's feature dimension is greater than the second layer's feature dimension.

According to~\cref{theorem1}, $\mathcal{H}^{(1)}=N(A_2)$ and $\mathcal{H}^{(0)}=\{\mathbf{0}\}$. Then, $\mathcal{P}^{(1)}=\{\delta|A_1\delta = \delta^{(1)},\forall \delta^{(1)}\in N(A_2)\cap C(A_1)\}$ and $\mathcal{P} = \mathcal{P}^{(0)}\cup\mathcal{P}^{(1)}=\mathcal{P}^{(1)}$.

\end{proof}

\setcounter{lemma}{1} 
\counterwithin*{lemma}{part}
\renewcommand{\thelemma}{\arabic{lemma}}

\begin{lemma}[The least harmful perturbation for a linear layer]
\label{lemma:least_harmful}
If the input dimension of a given linear layer $f^{(l+1)}(z^{(l)}) = Az^{(l)}$ is less than or equal to the output dimension, and the column vectors of $A$ are linearly independent, \textit{i.e.}, $\mathcal{H}^{(l)} = \{\mathbf{0}\}$, then the least harmful perturbation $(\delta^{(l)})^\ast$ is the eigenvector corresponding to the smallest eigenvalue of the matrix $A^\top A$.
\end{lemma}

\begin{proof} 
According to~\cref{remark2}, if the input dimension of the linear layer is less than or equal to the output dimension, and the column vectors of $A$ are linearly independent, then the dimension of the subspace for harmless perturbations is $dim(\mathcal{H}^{(l)}) = 0$. Thus, $\mathcal{H}^{(l)} = \{\mathbf{0}\}$, which means that there exists no (non-zero) harmless perturbations.

However, in this case, we can solve for the least harmful perturbation such that the output of the given linear layer is minimally affected. Given the matrix $A$ with equivalent effect of a linear layer,  the least harmful perturbation is 
\begin{equation} \label{eq:least_harmful_perturbation}
(\delta^{(l)})^\ast = {\rm arg min}_{\delta^{(l)}}\| A \delta^{(l)}\|_2, \textit{s.t.,} \|\delta^{(l)}\|_2=1.
\end{equation}
which can be rewritten as a quadratically constrained quadratic program,
\begin{equation} \label{eq:least_harmful_perturbation}
(\delta^{(l)})^\ast = {\rm arg min}_{\delta^{(l)}}\| A \delta^{(l)}\|^2_2, \textit{s.t.,} \|\delta^{(l)}\|^2_2=1.
\end{equation}
We solve above equation by introducing Lagrange function,
\begin{equation}
\begin{aligned}
 L(\delta^{(l)}, \gamma) &= \| A \delta^{(l)}\|^2_2 + \lambda(1-\|\delta^{(l)}\|^2_2) \\
&= (\delta^{(l)})^{\top} A^\top A \delta^{(l)} + \lambda (1-(\delta^{(l)})^{\top} \delta^{(l)})
 \end{aligned}
\end{equation}
Then the critical points of the Lagrange function can be found by, 
\begin{equation} \label{eq:lagrange_function}
\frac{\partial L(\delta^{(l)}, \gamma)}{\partial \delta^{(l)}} = 2A^\top A \delta^{(l)} - 2\lambda \delta^{(l)}=\mathbf{0}
\end{equation}

Then,~\cref{eq:lagrange_function} can be further written as $(A^\top A - \lambda I)\delta^{(l)}=\mathbf{0}$. That is, each eigenvector $\delta^{(l)}$ of $A^\top A$ with corresponding eigenvalue $\lambda$ is a critical point. To obtain the smallest $\| A \delta^{(l)}\|^2_2$, the smallest function value $\| A \delta^{(l)}\|^2_2$ at a critical point should be chosen. The solution of~\cref{eq:least_harmful_perturbation} is,
\begin{equation}
\| A \delta^{(l)}\|^2_2 = (\delta^{(l)})^\top A^\top A \delta^{(l)} = (\delta^{(l)})^\top \lambda \delta^{(l)} =  \lambda \|\delta^{(l)}\|^2_2 = \lambda
\end{equation}

Thus, the least harmful perturbation $(\delta^{(l)})^\ast$ is the eigenvector corresponding to the smallest eigenvalue  of the matrix $A^\top A$.

\end{proof}

\begin{lemma}
\label{lemma:no_harmless_perturbations_for_linear_layers}
If there exists no harmless perturbations in the $(l+1)$-th linear layer, \textit{i.e.}, $\mathcal{H}^{(l)}= \{\mathbf{0}\}$,  then there exist no two different perturbations $\forall \delta^{(l)}\ne \hat{\delta}^{(l)} \in \mathbb{R}^{n^{(l)}}$that produce the the same output of the layer, \textit{i.e.}, $f^{(l+1)}(\delta^{(l)})\ne f^{(l+1)}(\hat{\delta}^{(l)})$.
\end{lemma}
\begin{proof} 
Let the matrix $A \in \mathbb{R}^{n^{(l+1)}\times n^{(l)}}$ with linearly independent rows/columns have an equivalent effect with the parameters of the linear layer, \textit{i.e.}, $z^{(l+1)}=f^{(l+1)}(z^{(l)})=Az^{(l)}\in \mathbb{R}^{n^{(l+1)}}$. 

Let $\mathcal{H}^{(l)}=\{\delta^{(l)}\in \mathbb{R}^{n^{(l)}}|A(z^{(l)}+\delta^{(l)})=Az^{(l)}\}=\{\delta^{(l)}|A\delta^{(l)}=\mathbf{0}\} = N(A) = \{\mathbf{0}\}$.

We will prove  $\forall \delta^{(l)}\ne \hat{\delta}^{(l)} \in \mathbb{R}^{n^{(l)}}, f^{(l+1)}(\delta^{(l)})\ne f^{(l+1)}(\hat{\delta}^{(l)})$ by contradiction. For any two different perturbations $\forall \delta^{(l)}\ne \hat{\delta}^{(l)} \in \mathbb{R}^{n^{(l)}}$, it has $\delta^{(l)} - \hat{\delta}^{(l)} \ne \mathbf{0}$.

Assume that $\exists \delta^{(l)}\ne \hat{\delta}^{(l)} \in \mathbb{R}^{n^{(l)}}, f^{(l+1)}(\delta^{(l)}) = f^{(l+1)}(\hat{\delta}^{(l)})$. Then, it can be derived that $A\delta^{(l)} = A\hat{\delta}^{(l)}$ and is equivalent to $A(\delta^{(l)} - \hat{\delta}^{(l)})=\mathbf{0}$, which is in contradiction to $N(A) = \{\mathbf{0}\}$ since  $\delta^{(l)} - \hat{\delta}^{(l)} \ne \mathbf{0}$.

Thus, if $\mathcal{H}^{(l)}= \{\mathbf{0}\}$,  then there exist no two different perturbations $\forall \delta^{(l)}\ne \hat{\delta}^{(l)} \in \mathbb{R}^{n^{(l)}}$that produce the the same output of the layer, \textit{i.e.}, $f^{(l+1)}(\delta^{(l)})\ne f^{(l+1)}(\hat{\delta}^{(l)})$.

\end{proof}

\begin{lemma}
\label{lemma:different_direction_perturbation}
Given two different perturbations $\forall \delta^{(l)}\ne \hat{\delta}^{(l)} \in \mathbb{R}^{n^{(l)}}$ and $\delta^{(l)}, \hat{\delta}^{(l)} \notin \mathcal{H}^{(l)}$, $\mathcal{H}^{(l)}\ne \{\mathbf{0}\}$, if their corresponding orthogonal components have different directions, \textit{i.e.}, $\delta_{\bot}^{(l)} \ne \alpha \cdot \hat{\delta}_{\bot}^{(l)}, \alpha \in \mathbb{R}$, then $f^{(l+1)}(\delta^{(l)}) \ne  f^{(l+1)}(\hat{\delta}^{(l)})$.
\end{lemma}
\begin{proof} 
The linear layer is  $z^{(l+1)}=f^{(l+1)}(z^{(l)})=Az^{(l)}\in \mathbb{R}^{n^{(l+1)}}$. 

According to~\cref{theorem3}, $\forall \delta^{(l)}\in \mathbb{R}^{n^{(l)}}$ and $\delta^{(l)}\notin \mathcal{H}^{(l)}, f^{(l+1)}(\delta^{(l)}) = f^{(l+1)}(\delta_{\bot}^{(l)})$.

Similarly, $\forall \hat{\delta}^{(l)} \ne \delta^{(l)} \in \mathbb{R}^{n^{(l)}}$ and $\hat{\delta}^{(l)}\notin \mathcal{H}^{(l)}, f^{(l+1)}(\hat{\delta}^{(l)}) = f^{(l+1)}(\hat{\delta}_{\bot}^{(l)})$.

We will prove that if $\delta_{\bot}^{(l)} \ne \alpha \cdot \hat{\delta}_{\bot}^{(l)}, \alpha \in \mathbb{R}$, then $f^{(l+1)}(\delta^{(l)}) \ne  f^{(l+1)}(\hat{\delta}^{(l)})$ by contradiction.

Assume that $\exists \delta_{\bot}^{(l)} \ne \alpha \cdot \hat{\delta}_{\bot}^{(l)}$ such that $f^{(l+1)}(\delta^{(l)}) =  f^{(l+1)}(\hat{\delta}^{(l)})$. In this way, if $f^{(l+1)}(\delta^{(l)}) =  f^{(l+1)}(\hat{\delta}^{(l)})$, it has $f^{(l+1)}(\delta_{\bot}^{(l)}) = f^{(l+1)}(\delta^{(l)}) =    f^{(l+1)}(\hat{\delta}^{(l)}) = f^{(l+1)}(\hat{\delta}_{\bot}^{(l)})$, therefore, $A\delta_{\bot}^{(l)} = A\delta^{(l)} = A\hat{\delta}^{(l)}= A\hat{\delta}_{\bot}^{(l)}$. Furthermore, it can be derived that $A(\delta_{\bot}^{(l)} - \hat{\delta}_{\bot}^{(l)}) = \mathbf{0}$ and $\delta_{\bot}^{(l)} - \hat{\delta}_{\bot}^{(l)} \in N(A)$.

However, according to~\cref{theorem3}, these two orthogonal components are orthogonal to $\mathcal{H}^{(l)}= N(A) \ne \{\mathbf{0}\}$, respectively, \textit{i.e.}, $\delta_{\bot}^{(l)} \bot N(A)$ and $\hat{\delta}_{\bot}^{(l)} \bot N(A)$. In other words, $\delta_{\bot}^{(l)} \in N^{\bot}(A)$ and $\hat{\delta}_{\bot}^{(l)} \in N^{\bot}(A)$. Linear algebra states that, the row space $C(A^\top)$ of a matrix $A$ is orthogonal to the nullspace $N(A)$ of the matrix $A$, \textit{i.e.}, $C(A^\top) = N^{\bot}(A)$. Then, it has $\delta_{\bot}^{(l)} \in C(A^\top)$ and $\hat{\delta}_{\bot}^{(l)} \in C(A^\top)$. Thus, $\delta_{\bot}^{(l)} - \hat{\delta}_{\bot}^{(l)} \in C(A^\top)$.

Since (1) $\delta_{\bot}^{(l)} - \hat{\delta}_{\bot}^{(l)} \in N(A)$ and (2) $\delta_{\bot}^{(l)} - \hat{\delta}_{\bot}^{(l)} \in C(A^\top) = N^{\bot}(A)$, we can derive that $\delta_{\bot}^{(l)} - \hat{\delta}_{\bot}^{(l)} = \mathbf{0}$, which is in contradiction to $\delta_{\bot}^{(l)} \ne \alpha \cdot \hat{\delta}_{\bot}^{(l)}, \alpha \in \mathbb{R}$.

Thus, we have proved that given two different perturbations $\forall \delta^{(l)}\ne \hat{\delta}^{(l)} \in \mathbb{R}^{n^{(l)}}$, if their corresponding orthogonal components have different directions, \textit{i.e.}, $\delta_{\bot}^{(l)} \ne \alpha \cdot \hat{\delta}_{\bot}^{(l)}$, then $f^{(l+1)}(\delta^{(l)}) \ne  f^{(l+1)}(\hat{\delta}^{(l)})$.

\end{proof}

\section{Proof in~\cref{sec:projection_onto_subspace}}
\label{appx:orthogonal_decomposition} 
In the fifth paragraph of~\cref{sec:projection_onto_subspace}, we state that among all the perturbations leading to identical layer outputs, there exists a unique perturbation characterized by the smallest $\ell_2$ norm, which is orthogonal to the harmless perturbation subspace. 

\begin{proof} 
\cref{lemma:different_direction_perturbation} has proved that given two different perturbations $\forall \delta^{(l)}\ne \hat{\delta}^{(l)} \in \mathbb{R}^{n^{(l)}}$, if they lead to the same layer output, \textit{i.e.}, $f^{(l+1)}(\delta^{(l)}) =  f^{(l+1)}(\hat{\delta}^{(l)})$, then they share the same orthogonal component $\delta_{\bot}^{(l)} = \hat{\delta}_{\bot}^{(l)}$.


Next, we will prove that among all the perturbations that lead to the same layer output, the orthogonal perturbation $\delta^{(l)}_{\bot}$ has the smallest $\ell_2$ norm. 

According to~\cref{theorem3}, $\forall \delta^{(l)} \notin \mathcal{H}^{(l)}$,  given any perturbation $\forall \delta^{(l)}\in \mathbb{R}^{n^{(l)}}$ and $\delta^{(l)}\notin \mathcal{H}^{(l)}$, it has a unique decomposition $\delta^{(l)} = \delta^{(l)}_{\parallel} + \delta^{(l)}_{\bot}$, where $\delta^{(l)}_{\bot}\notin \mathcal{H}^{(l)}$ denotes the  component that is orthogonal to the harmless perturbation subspace $\mathcal{H}^{(l)}$. Since $\delta^{(l)}_{\parallel} \in \mathcal{H}^{(l)}$, then $\delta^{(l)}_{\bot} \bot \delta^{(l)}_{\parallel}$. Thus, it has $\|\delta^{(l)}\|^2 = \|\delta^{(l)}_{\parallel}\|^2 + \|\delta^{(l)}_{\bot}\|^2$. 

Furthermore,~\cref{theorem3} states that $f^{(l+1)}(\delta^{(l)}_{\parallel})= \mathbf{0}$ and $f^{(l+1)}(\delta^{(l)}) = f^{(l+1)}(\delta^{(l)}_{\bot})$. That is, $\|\delta^{(l)}_{\parallel}\|^2$ does not affect the network output. Thus, $\|\delta^{(l)}\|^2_{\text{\rm min}} = \|\delta^{(l)}_{\parallel}\|^2_{\text{\rm min}} + \|\delta^{(l)}_{\bot}\|^2 = 0+\|\delta^{(l)}_{\bot}\|^2 = \|\delta^{(l)}_{\bot}\|^2$.

Thus, among all the perturbations that lead to the same layer output, the orthogonal perturbation $\delta^{(l)}_{\bot}$ has the smallest $\ell_2$ norm.

\end{proof}

\section{Generation of an equivalence matrix $A$ for a convolutional layer}
\label{appx:equivalent_matrix}
In this section, we represent the convolutional layer as an equivalent matrix $A$, which is a sparse matrix.

Formally, let the matrix $A\in \mathbb{R}^{(C_{\text{\rm out}}  H_{\text{\rm out}} W_{\text{\rm out}})\times  (C_{\text{\rm in}}  H_{\text{\rm in}}  W_{\text{\rm in}})}$ with linearly independent rows/columns have an equivalent effect with the parameters of the convolutional layer $z^{(l+1)}=f^{(l+1)}(z^{(l)})$, \textit{i.e.}, $\hat{z}^{(l+1)}=A\hat{z}^{(l)}$, $\hat{z}^{(l)}\in \mathbb{R}^{C_{\text{\rm in}}  H_{\text{\rm in}}  W_{\text{\rm in}}}$ is the vectorized feature $z^{(l)}$, and $\hat{z}^{(l+1)}\in \mathbb{R}^{C_{\text{\rm out}}  H_{\text{\rm out}}  W_{\text{\rm out}}}$ is the vectorized feature $z^{(l+1)}$. 

To compute the equivalent matrix $A$ of the convolutional layer, we divide the generation process into three steps. (1) Consider a convolution kernel acting on a local receptive field as one equation. (2) Consider a convolution kernel acting on the whole input as $ H_{\text{\rm out}}  W_{\text{\rm out}}$ equations. (3) Consider $C_{\text{\rm out}}$ convolution kernels acting on the whole input as $C_{\text{\rm out}}  H_{\text{\rm out}}  W_{\text{\rm out}}$ equations.

(1) Consider a convolution kernel acting on a local receptive field as one equation.

Let $K\in \mathbb{R}^{c\times h\times w}$ denote a convolutional kernel and $k = [k_1, k_2, \cdots, k_{chw}]^\top \in \mathbb{R}^{ch w}$ denote the vectorized kernel. Let $X\in \mathbb{R}^{c\times h\times w}$ denote an input region/patch covered by a neuron's receptive field, and $x = [x_1, x_2, \cdots, x_{chw}]^\top \in \mathbb{R}^{chw}$ denote the vectorized input patch. Then, the scalar output of the convolutional kernel acting on this input patch is $k^\top x \in \mathbb{R}$. 

To compute a harmless perturbation patch $x$, it is equivalent to compute an equation $k^\top x = 0$, which represents the input patch satisfying the equation has no effect on the output, after passing through this convolutional kernel. In this way, the equivalent matrix $A = k^\top$.

(2) Consider a convolution kernel acting on the whole input as $ H_{\text{\rm out}}  W_{\text{\rm out}}$ equations.

Let $K\in \mathbb{R}^{c\times h\times w}$ denote a convolutional kernel and $k = [k_1, k_2, \cdots, k_{chw}]^\top \in \mathbb{R}^{ch w}$ denote the vectorized kernel. Let $z^{(l)}\in \mathbb{R}^{C_{\text{\rm in}} \times H_{\text{\rm in}} \times W_{\text{\rm in}}}$ denote the whole input feature, and $x \coloneqq \hat{z}^{(l)} = [x_1, x_2, \cdots, x_{C_{\text{\rm in}}  H_{\text{\rm in}}  W_{\text{\rm in}}}]^\top \in \mathbb{R}^{C_{\text{\rm in}}  H_{\text{\rm in}}  W_{\text{\rm in}}}$ denote the vectorized feature $z^{(l)}$. Here, $C_{\text{\rm in}}=c$.

Given a convolutional layer with the stride $S$  and the zero padding $O$, the convolutional kernel $K$ acts on the whole input feature $z^{(l)}$ to yield $H_{\text{\rm out}}  W_{\text{\rm out}}$ scalar outputs. Here, $H_{\text{\rm out}} = \lfloor \frac{H_{\text{\rm in}} - h + 2O}{S} \rfloor + 1$ and $W_{\text{\rm out}} = \lfloor \frac{W_{\text{\rm in}} - w + 2O}{S} \rfloor + 1$. In other words, there are a total of $H_{\text{\rm out}}  W_{\text{\rm out}}$ input patches covered by the receptive fields of a total of $H_{\text{\rm out}}  W_{\text{\rm out}}$ neurons. 

Let the $i$-th input patch $P^{(i)}\in \mathbb{R}^{c\times h\times w}$ denote the region covered by the receptive field of the $i$-th neuron, which has a total of $chw$ units.  Within the input region, each unit of $P^{(i)}$ is an unknown variable $x_j, j\in\{1, 2, \cdots, C_{\text{\rm in}}  H_{\text{\rm in}}  W_{\text{\rm in}}\}$ of the input feature $x$, or 0 if the unit is a zero padding unit in the receptive field. Let $p^{(i)}\in \mathbb{R}^{chw}$ denote the vectorized input  patch $P^{(i)}$.

To compute the harmless perturbation $x$,  let the outputs of the $H_{\text{\rm out}}  W_{\text{\rm out}}$ neurons all be zero. That is, the scalar output of each neuron needs to satisfy $k^\top p^{(i)} = 0, \forall i \in \{1, 2, \cdots,  H_{\text{\rm out}}  W_{\text{\rm out}}\}$. Thus, there are a total of $H_{\text{\rm out}}  W_{\text{\rm out}}$ equations with $C_{\text{\rm in}}  H_{\text{\rm in}}  W_{\text{\rm in}}$ unknown variables. Rewriting the $H_{\text{\rm out}}  W_{\text{\rm out}}$ equations in matrix form yields an equivalent matrix $A \in \mathbb{R}^{(H_{\text{\rm out}} W_{\text{\rm out}})\times  (C_{\text{\rm in}}  H_{\text{\rm in}}  W_{\text{\rm in}})}$ that satisfies $Ax=\mathbf{0}$. Here, the equivalent matrix $A$ is a sparse matrix, since $chw \ll C_{\text{\rm in}}  H_{\text{\rm in}}  W_{\text{\rm in}}$ is usually satisfied.

It is worth noting that when the kernel size of the convolutional layer is greater than or equal to the stride, the convolutional kernel acts on all $C_{\text{\rm in}}  H_{\text{\rm in}}  W_{\text{\rm in}}$ input variables, which means that each unknown variable is constrained by at least one equation. In this case, the matrix $A \in \mathbb{R}^{(H_{\text{\rm out}} W_{\text{\rm out}})\times  (C_{\text{\rm in}}  H_{\text{\rm in}}  W_{\text{\rm in}})}$ has linearly independent rows/columns. When the kernel size of the convolutional layer is smaller than the stride, the convolutional kernel only acts on some of input variables, which means that there are some unknown variables that are not constrained by any of the equations (variables not constrained by equations can take any value). In this case, the column vectors of the matrix $A\in \mathbb{R}^{(H_{\text{\rm out}} W_{\text{\rm out}})\times  (C_{\text{\rm in}}  H_{\text{\rm in}}  W_{\text{\rm in}})}$ are linearly dependent.

(3) Consider $C_{\text{\rm out}}$ convolution kernels acting on the whole input as $C_{\text{\rm out}}  H_{\text{\rm out}}  W_{\text{\rm out}}$ equations.

A convolutional kernel acting on the whole input yields $H_{\text{\rm out}}  W_{\text{\rm out}}$ equations ($H_{\text{\rm out}}  W_{\text{\rm out}}$ linearly independent row vectors in the matrix $A \in \mathbb{R}^{(H_{\text{\rm out}} W_{\text{\rm out}})\times  (C_{\text{\rm in}}  H_{\text{\rm in}}  W_{\text{\rm in}})}$).


Given a convolutional layer with linearly independent vectorized kernels, a total of $C_{\text{\rm out}}$ convolutional kernels acting on the whole input can yield $C_{\text{\rm out}}  H_{\text{\rm out}}  W_{\text{\rm out}}$ equations ($C_{\text{\rm out}}  H_{\text{\rm out}}  W_{\text{\rm out}}$ linearly independent row vectors in the matrix $A \in \mathbb{R}^{( C_{\text{\rm out}} H_{\text{\rm out}} W_{\text{\rm out}})\times  (C_{\text{\rm in}}  H_{\text{\rm in}}  W_{\text{\rm in}})}$).


\section{Experimental details}

\subsection{Experiment details for verifying the dimension of harmless perturbation subspace in~\cref{Fig:corollaries_harmless_perturbation}} \label{appx:verifying_dimension}

To verify the dimension of harmless perturbation subspace in~\cref{subsec:subspace_harmless_perturbations_for_linear_layers}, we fixed the input dimension and increased the output dimension of the first linear layer of each DNN trained on different dataset. To increase the output dimension of the linear layer, we increased the number of convolutional kernels $C_{\text{\rm out}}$ for the convolutional layer, and the number of neurons $N_{\text{\rm out}}$ for the fully-connected layer, respectively. 

For convolutional layers, if the input dimension is greater than the output
dimension, then the dimension of the harmless perturbation subspace for a convolutional layer is $dim(\mathcal{H}^{(l)}) = C_{\text{\rm in}}  H_{\text{\rm in}}  W_{\text{\rm in}} - C_{\text{\rm out}}  H_{\text{\rm out}} W_{\text{\rm out}}$. We modified the feature size of the output $H_{\text{\rm out}}$ and $W_{\text{\rm out}}$ of the first convolutional layer by setting different strides $S$ and kernel sizes $K$ ($K>S$). Here, $H_{\text{\rm out}} = \lfloor \frac{H_{\text{\rm in}} - K + 2P}{S} \rfloor + 1$ and $W_{\text{\rm out}} = \lfloor \frac{W_{\text{\rm in}} - K + 2P}{S} \rfloor + 1$. 

Specifically, (1) when the stride $S=1$, the kernel size $K=3$, and the zero padding $P=1$, it has $H_{\text{\rm out}} = H_{\text{\rm in}}$ and $W_{\text{\rm out}} = W_{\text{\rm in}}$. In this case, for the VGG-16 (stride $S= 1$) on the CIFAR-10 dataset, when $C_{\text{\rm out}} = C_{\text{\rm in}} = 3$, $dim(\mathcal{H}^{(l)}) = 0$. 

(2) Similarly, when $S=2, K=3$, and $P=1$, it has $H_{\text{\rm out}} = 0.5H_{\text{\rm in}}$ and $W_{\text{\rm out}} = 0.5W_{\text{\rm in}}$. In this case, for the ResNet-18/50 and EfficientNet (stride $S= 2$), when $C_{\text{\rm out}} = 4 C_{\text{\rm in}} = 12$, $dim(\mathcal{H}^{(l)}) = 0$. 

(3) When $S=4, K=5$, and $P=1$, it has $H_{\text{\rm out}} = 0.25H_{\text{\rm in}}$ and $W_{\text{\rm out}} = 0.25W_{\text{\rm in}}$. In this case, for the ResNet-18 (stride $S= 4$), when $C_{\text{\rm out}} = 16 C_{\text{\rm in}} = 48$, $dim(\mathcal{H}^{(l)}) = 0$.

For fully-connected layers, if the input dimension is greater than the output dimension, then the dimension of the harmless perturbation subspace for a fully-connected layer is $dim(\mathcal{H}^{(l)}) = N_{\text{\rm in}} - N_{\text{\rm out}}$.  We increased the number of neurons $N_{\text{\rm out}}$ of the first fully-connected layer. In this case, for the MLP-5 on the MNIST dataset, when $N_{\text{\rm out}} = N_{\text{\rm in}}=28\times 28 =784$, $dim(\mathcal{H}^{(l)}) = 0$.  For the MLP-5 on the CIFAR-10/100 and SVHN datasets, when $N_{\text{\rm out}} = N_{\text{\rm in}}=3\times 32\times 32 =3072$, $dim(\mathcal{H}^{(l)}) = 0$.

\begin{figure}[h]
\centering
\includegraphics[width=0.8\linewidth]{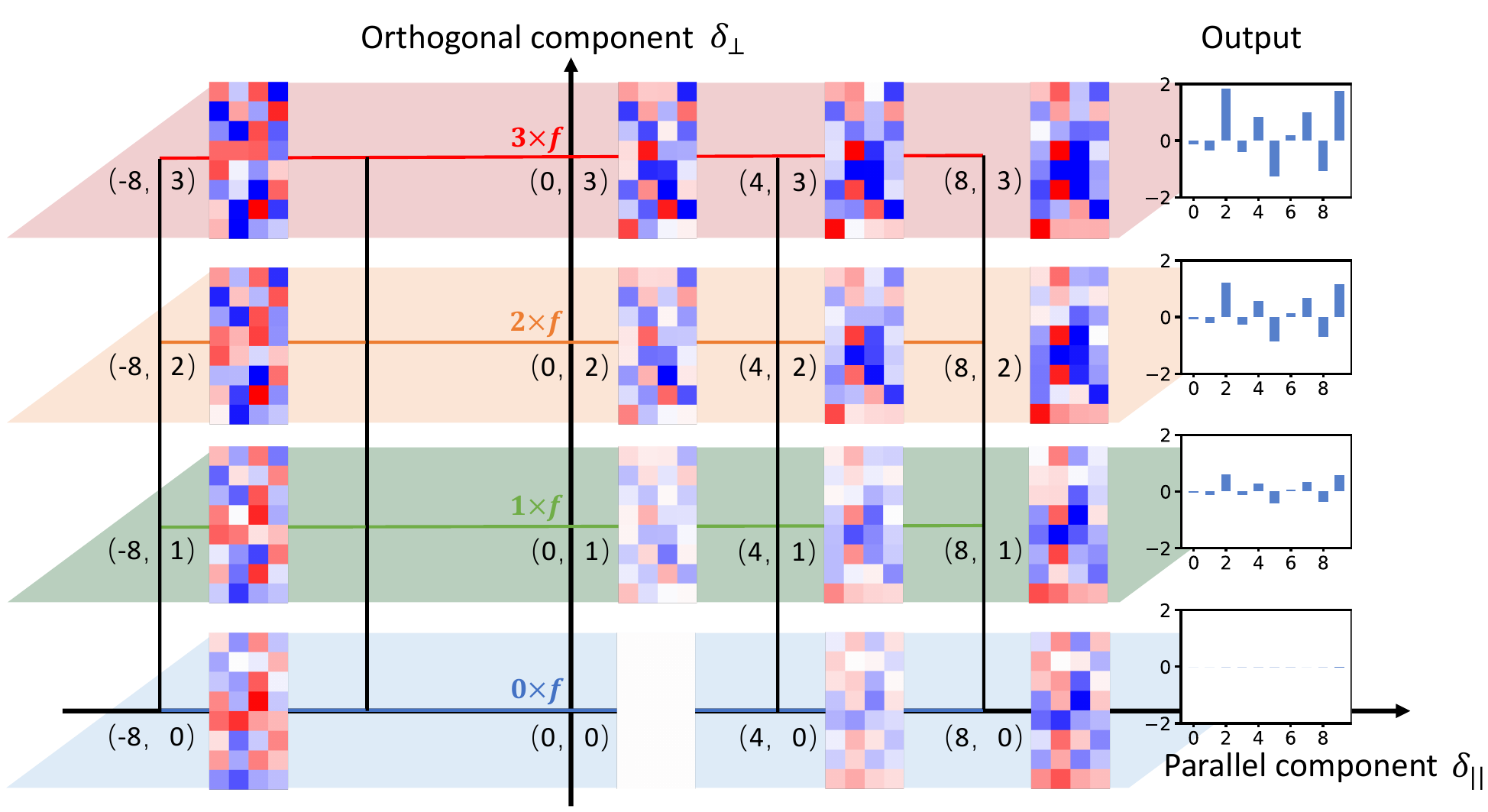}
\vskip -0.1in
\caption{Contour map of the actual impact
of any perturbations on the output of the last linear layer. Perturbations in the same row share identical orthogonal components, leading to identical network outputs. Perturbations in different rows exhibit proportional orthogonal components, leading to proportional network outputs.}
\label{Fig:example_of_theorems}
\vskip -0.2in
\end{figure}

\subsection{Experiment details for evaluating the network performance  in~\cref{Fig:accuracy_of_harmless_perturbation}} 
\label{appx:evaluating_performance}
To evaluate the impact of harmless perturbations  on network performance in~\cref{subsec:subspace_harmless_perturbations_for_linear_layers}, we trained the CIFAR-10
dataset on different networks and tested the effects of harmless perturbations on the network performance across varying perturbation magnitudes.

To ensure the existence of harmless perturbation subspace, it is necessary to guarantee that the input dimension of the convolutional layer exceeds the output dimension. Specifically, when computing harmless perturbations on images, we set the number of convolutional kernels to 10 in the first convolutional layer of the DNNs. When computing harmless perturbations on intermediate-layer features, we selected the first linear layer whose input dimension exceeded the output dimension.

When computing harmless perturbations on images, we replaced the original  convolutional layer in the ResNet18/50, which had 64  kernels with the kernel size 7, the stride 2, and the zero padding 3, with two convolutional layers. Specifically,  the first convolutional layer had 10 kernels with the kernel size 7, the stride 2, and the zero padding 3. The second convolutional layer had 64 kernels with the kernel size 3, the stride 1, and the zero padding 1. Thus, the dimension of the harmless perturbation subspace of the first convolutional layer after the replacement was $dim(\mathcal{H}^{(l)}) = C_{\text{\rm in}}  H_{\text{\rm in}}  W_{\text{\rm in}} - C_{\text{\rm out}}  H_{\text{\rm out}} W_{\text{\rm out}} = 3\times 32\times 32 - 10 \times 16 \times 16 = 512$.

When computing harmless perturbations on intermediate-layer features, we selected the first linear layer whose input dimension exceeds the output dimension. We removed the skip connections of selected convolutional layers. Specifically, for the ResNet-18, we computed the harmless perturbation subspace of the first convolutional layer of the 0-th block of the second layer. The dimension of the harmless perturbation subspace of the chosen convolutional layer was $dim(\mathcal{H}^{(l)}) = C_{\text{\rm in}}  H_{\text{\rm in}}  W_{\text{\rm in}} - C_{\text{\rm out}}  H_{\text{\rm out}} W_{\text{\rm out}} = 64\times 8\times 8 - 128 \times 4 \times 4 = 2048$. For the ResNet-50, we computed the harmless perturbation subspace of the first convolutional layer of the second block of the first layer. The dimension of the harmless perturbation subspace of the chosen convolutional layer was $dim(\mathcal{H}^{(l)}) = C_{\text{\rm in}}  H_{\text{\rm in}}  W_{\text{\rm in}} - C_{\text{\rm out}}  H_{\text{\rm out}} W_{\text{\rm out}} = 256\times 8\times 8 - 64 \times 8 \times 8 = 12288$.


Since there were infinite harmless perturbations in the harmless perturbation subspace, we chose a harmless perturbation direction to verify the network performance. Without loss of generality, we employed parallel components of adversarial perturbations (see~\cref{theorem3}) as the chosen directions of harmless perturbations. According to~\cref{theorem3}, the parallel component of an arbitrary perturbation is a harmless perturbation. 

To achieve this, we first generated adversarial perturbations $\delta^{\text{adv}}$ by PGD-20 for PGD with 20 steps, the maximum perturbation was set to $\epsilon = 8/255$ and the step size was set to $1/255$~\citep{madry2017towards}. Then, we produced the corresponding parallel components of the adversarial perturbations $\delta_{\parallel}^{\text{adv}} = P\delta^{\text{adv}}$ according to~\cref{theorem3}. Finally, we scaled the parallel components $\delta_{\parallel}^{\text{adv}}$ to obtain new perturbations, \textit{i.e.}, $\hat{\delta}_{\parallel}^{\text{adv}} = \frac{\epsilon}{\|\delta_{\parallel}^{\text{adv}}\|_\infty}  \cdot \delta_{\parallel}^{\text{adv}}$, such that the generated perturbations statisfied $\|\hat{\delta}_{\parallel}^{\text{adv}}\|_\infty = 8/255$. In~\cref{fig:corollaries_harmless_perturbation_conv}, we increased the magnitude of the generated perturbations by $\alpha \cdot \hat{\delta}_{\parallel}^{\text{adv}}$, where $\alpha = \{2^0, 2^1,\cdots, 2^8\}$.

\subsection{Experiment details for decomposing arbitrary perturbations  in~\cref{Fig:example_of_theorems}}
\label{appx:decomposition_perturbations}
To verify the decomposition of arbitrary perturbations in~\cref{theorem3} and~\cref{theorem4}, we plotted the contour map of the output of the last linear layer. According to~\cref{theorem4}, given a linear layer with a harmless perturbation subspace, the contour map of the layer output can be plotted along the direction of the chosen orthogonal component. We conducted experiments on MLP-5 with 32 neurons in the penultimate layer on the CIFAR-10 dataset.~\cref{Fig:example_of_theorems} illustrates the contour map of the actual impact of perturbations on the network output, in which perturbations were generated through linear combinations of orthogonal and parallel components of an arbitrary perturbation.


\subsection{Experiment details for privacy protection}
\label{appx:privacy_protection}
To achieve the application of the harmless
perturbation space for privacy protection in~\cref{privacy_protection}, we have to obtain the harmless perturbation subspace in the first linear layer. 

To ensure the existence of harmless perturbation subspace, we modified the number of convolution kernels to 10 in the first
convolutional layer in the ResNet-50. To achieve this, we replaced the original convolutional layer in the ResNet-50, which had 64  kernels with the kernel size 7, the stride 2, and the zero padding 3, with two convolutional layers. Specifically,  the first convolutional layer had 10 kernels with the kernel size 7, the stride 2, and the zero padding 3. The second convolutional layer had 64 kernels with the kernel size 3, the stride 1, and the zero padding 1. Thus, the dimension of the harmless perturbation subspace of the first convolutional layer after the replacement was $dim(\mathcal{H}^{(l)}) = C_{\text{\rm in}}  H_{\text{\rm in}}  W_{\text{\rm in}} - C_{\text{\rm out}}  H_{\text{\rm out}} W_{\text{\rm out}} = 3\times 32\times 32 - 10 \times 16 \times 16 = 512$.

\end{document}